\theoremstyle{plain}
\newtheorem{theorem}{Theorem}[section]
\newtheorem*{theorem*}{Theorem}
\theoremstyle{definition}
\newtheorem{definition}[theorem]{Definition}
\theoremstyle{remark}
\newtheorem{remark}[theorem]{Remark}
\title{Detecting danger in gridworlds using \\ Gromov's Link Condition}
\author{%
  Thomas F Burns \\
  Neural Coding and Brain Computing Unit\\
  OIST Graduate University\\
  1919-1 Tancha, Onna-son, Kunigami-gun\\
  Okinawa, Japan 904-0495\\
  \texttt{t.f.burns@gmail.com} \\
   \And
  Robert Tang \\
  Department of Pure Mathematics\\
  Xi'an Jiaotong--Liverpool University\\
  111 Ren'ai Road, Suzhou Industrial Park\\
  Suzhou, Jiangsu Province, China 215123\\
  \texttt{robert.tang@xjtlu.edu.cn} \\
}
\begin{document}

\maketitle

\begin{abstract}
  Gridworlds have been long-utilised in AI research, particularly in reinforcement learning, as they provide simple yet scalable models for many real-world applications such as robot navigation, emergent behaviour, and operations research. We initiate a study of gridworlds using the mathematical framework of \textit{reconfigurable systems} and \textit{state complexes} due to Abrams, Ghrist \& Peterson. State complexes represent all possible configurations of a system as a single geometric space, thus making them conducive to study using geometric, topological, or combinatorial methods. The main contribution of this work is a modification to the original Abrams, Ghrist \& Peterson setup which we introduce to capture agent braiding and thereby more naturally represent the topology of gridworlds. With this modification, the state complexes may exhibit geometric defects (failure of \textit{Gromov's Link Condition}). Serendipitously, we discover these failures occur exactly where undesirable or dangerous states appear in the gridworld. Our results therefore provide a novel method for seeking guaranteed safety limitations in discrete task environments with single or multiple agents, and offer useful safety information (in geometric and topological forms) for incorporation in or analysis of machine learning systems. More broadly, our work introduces tools from geometric group theory and combinatorics to the AI community and demonstrates a proof-of-concept for this geometric viewpoint of the task domain through the example of simple gridworld environments.
\end{abstract}

\section{Introduction}\label{sec:intro}

The notion of a state (or configuration/phase) space is commonly used in mathematics and physics to represent all the possible states of a given system as a single geometric (or topological) object. This perspective provides a bridge which allows for tools from geometry and topology to be applied to the system of concern. Moreover, certain features of a given system are reflected by some geometric aspects of the associated state space (such as gravitational force being captured by \emph{curvature} in spacetime). Thus, insights into the structure of the original system can be gleaned by reformulating them in geometric terms.

In discrete settings, state spaces are typically represented by graphs or their higher dimensional analogues such as simplicial complexes or cube complexes.
Abrams, Ghrist \& Peterson's \emph{state complexes} \cite{AbramsGhrist, GhristPeterson} provide a general framework for representing discrete reconfigurable systems as non-positively curved (NPC) cube complexes, giving access to a wealth of mathematical and computational benefits via efficient optimisation algorithms guided by geometric insight \cite{Ardila-geodesics}. These have been used to develop efficient algorithms for robotic motion planning \cite{robot-moving,robot-arm-tunnel} and self-reconfiguration of modular robots \cite{modular-robot-planning}. NPC cube complexes also possess rich hyperplane structures which geometrically capture binary classification \cite{CN-wallspaces, Wise-raag, Sageev-cube}. However, their broader utility to fields like artificial intelligence (AI) has until now been relatively unexplored.

Our main contribution is the first application of this geometric approach (of using state complexes) to the setting of multi-agent gridworlds. 
We introduce a natural modification to the state complex appropriate to the setting of gridworlds (to capture the braiding or relative movements of agents); however, this can lead to state complexes which are no longer NPC.
Nevertheless, by applying Gromov's Link Condition, we completely characterise when positive curvature occurs in our new state complexes, and relate this to features of the gridworlds (see Theorem \ref{theorem:link-checking}). Serendipitously, we discover that the states where Gromov's Link Condition fails are those in which agents can potentially collide. In other words, collision-detection is naturally embedded into the intrinsic geometry of the system.  
Current approaches to collision-detection and navigation during multi-agent navigation often rely on modelling and predicting collisions based on large training datasets \cite{kenton2019safeml,Fan2020multiagentnav,Qin2021multiagentnav} or by explicitly modelling physical movements \cite{Kano2021physicalmodelling}. However, our approach is purely geometric, requires no training, and can accommodate many conceivable types of actions and inter-actions, not just simple movements.

Our work relates to a growing body of research aimed towards understanding, from a geometric perspective, how deep learning methods transform input data into decisions, memories, or actions \cite{RiemannianGeometryNN,GeometricUnderstandingDL,GeometryOfGenMem,CognitiveGeometryMDP,RobotLearningGeometry}.
However, such studies do not usually incorporate the geometry of the originating domain or task in a substantial way, before applying or investigating the performance of learning algorithms -- and even fewer do so for multi-agent systems. One possible reason for this is a lack of known suitable tools. Our experimental and theoretical results show there is a wealth of geometric information available in (even very simple) task domains, which is accessible using tools from geometric group theory and combinatorics.

\section{State complex of a gridworld}\label{sec:state-complex}

\begin{wrapfigure}{r}{0.25\textwidth}
  \begin{center}
    \includegraphics[width=\linewidth]{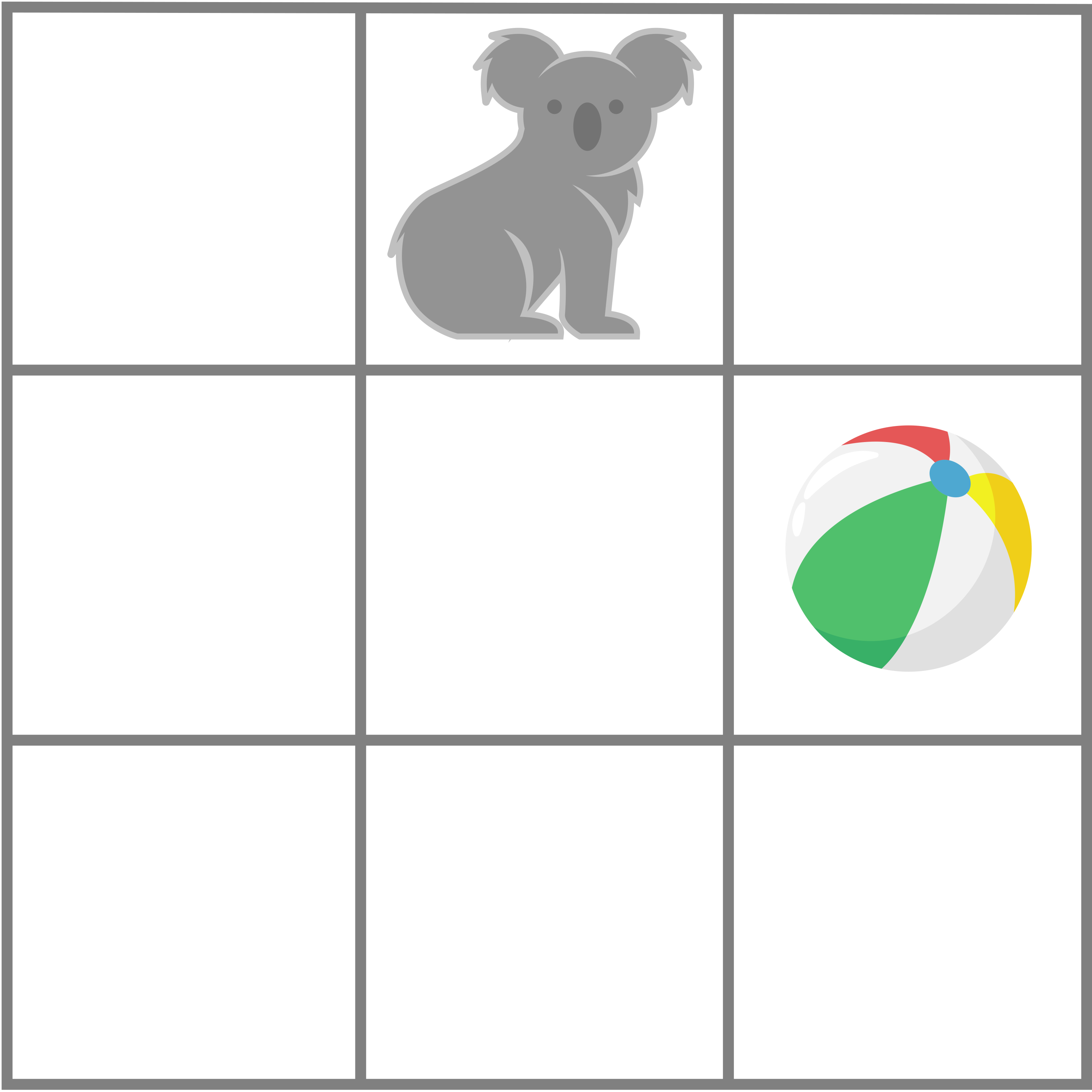}
  \end{center}
  \caption{A $3\times 3$ gridworld with one agent (a koala) and one object (a beach ball).}
  \label{fig:gridworld}
\end{wrapfigure}

A \emph{gridworld} is a two-dimensional, flat array of \emph{cells} arranged in a grid, much like a chess or checker board. Each cell can be occupied or unoccupied. A cell may be occupied, in our setting, by one and only one freely-moving agent or movable object. Other gridworlds may include rewards, punishments, buttons, doors, locks, keys, checkpoints, dropbears, etc., much like many basic video games. Gridworlds have been a long-utilised setting in AI research, particularly reinforcement learning, since they are simple yet scalable in size and sophistication \cite{UncertaintyAware,GraphStates}. They also offer clear analogies to many real-world applications or questions, such as robot navigation \cite{DroneNavigation}, emergent behaviour \cite{EmergentCommunication}, and operations research \cite{Flatland}. For these reasons, gridworlds have also been developed for formally specifying problems in AI safety \cite{SafetyGridworlds}.

A \emph{state} of a gridworld can be encoded by assigning each cell a \textit{label}. In the example shown in Figure \ref{fig:gridworld}, these labels are shown for an agent, an object, and empty floor. A change in the state, such as an agent moving from one cell to an adjacent empty cell, can be encoded by \emph{relabelling} the cells involved. This perspective allows us to take advantage of the notion of \textit{reconfigurable systems} as introduced by Abrams, Ghrist \& Peterson \cite{AbramsGhrist,GhristPeterson}.

More formally, consider a graph $G$ and a set $\mathcal{A}$ of labels.
A \emph{state} is a function $s : V(G)\to \mathcal{A}$, i.e. an assignment of a label to each vertex of $G$. A possible relabelling is encoded using a \textit{generator} $\phi$; this comprises the following data:
\begin{itemize}
    \item a subgraph $SUP(\phi) \subseteq G$ called the \textit{support};
    \item a subgraph $TR(\phi) \subseteq SUP(\phi)$ called the \textit{trace}; and
    \item an unordered pair of \emph{local states} 
    \[u_0^{loc}, u_1^{loc} : V(SUP(\phi)) \to \mathcal{A}\]
    that agree on {$V(SUP(\phi)) - V(TR(\phi))$} but differ on $V(TR(\phi))$.
\end{itemize}
A generator $\phi$ is \emph{admissible} at a state $s$ if $s|_{SUP(\phi)} = u_0^{loc}$ (or $u_1^{loc}$), in other words, if the assignment of labels to $V(SUP(\phi))$ given by $s$ completely matches the labelling from (exactly) one of the two local states. If this holds, we may apply $\phi$ to the state $s$ to obtain a new state $\phi[s]$ given by
\begin{equation*}
    \phi[s](v) := \begin{cases}
    u_1^{loc}(v), & v \in V(TR(\phi))\\
    s(v), & \textrm{otherwise.}
    \end{cases}
\end{equation*}
This has the effect of relabelling the vertices in (and only in) $TR(\phi)$ to match the other local state of $\phi$.
Since the local states are unordered, if $\phi$ is admissible at $s$ then it is also admissible at $\phi[s]$; moreover, $\phi[\phi[s]] = s$.

\begin{definition}[Reconfigurable system \cite{AbramsGhrist,GhristPeterson}]\label{def:reconfigurable-system}
A \emph{reconfigurable system} on a graph $G$ with a set of labels $\mathcal{A}$ consists of a set of generators together with a set of states closed under the action of admissible generators.
\end{definition}

Configurations and their reconfigurations can be used to construct a \emph{state graph} (or transition graph), which represents all possible states and transitions between these states in a reconfigurable system. More formally:

\begin{definition}[State graph]
The state graph $\mathcal{S}^{(1)}$ associated to a reconfigurable system has as its vertices the set of all states, with edges connecting pairs of states differing by a single generator.
\label{def:state-graph}
\end{definition}

Let us now return our attention to gridworlds. We define a graph $G$ to have vertices corresponding to the cells of a gridworld, with two vertices declared adjacent in $G$ exactly when they correspond to neighbouring cells (i.e. they share a common side). Our set of labels is chosen to be
\[\mathcal{A} = \{\texttt{`agent'}, \texttt{`object'}, \texttt{`floor'}\}.\]

We do not distinguish between multiple instances of the same label. We consider two generators:
\begin{itemize}
    \item \textbf{Push/Pull.} An agent adjacent to an object is allowed to push/pull the object if there is an unoccupied floor cell straight in front of the object/straight behind the agent; and
    \item \textbf{Move.} An agent is allowed to move to a neighbouring unoccupied floor cell.
\end{itemize}
These two generators have the effect of enabling agents to at any time move in any direction not blocked by objects or other agents, and for agents to push or pull objects within the environment into any configuration if there is sufficient room to move. For both types of generators, the trace coincides with the support. For the Push/Pull generator, the support is a row or column of three contiguous cells, whereas for the Move generator, the support is a pair of neighbouring cells. A simple example of a state graph, together with the local states for the two generator types, is shown in Figure \ref{fig:staircase-SC}.

\begin{figure}[h]
    \centering
    \begin{center}
    \includegraphics[width=0.95\linewidth]{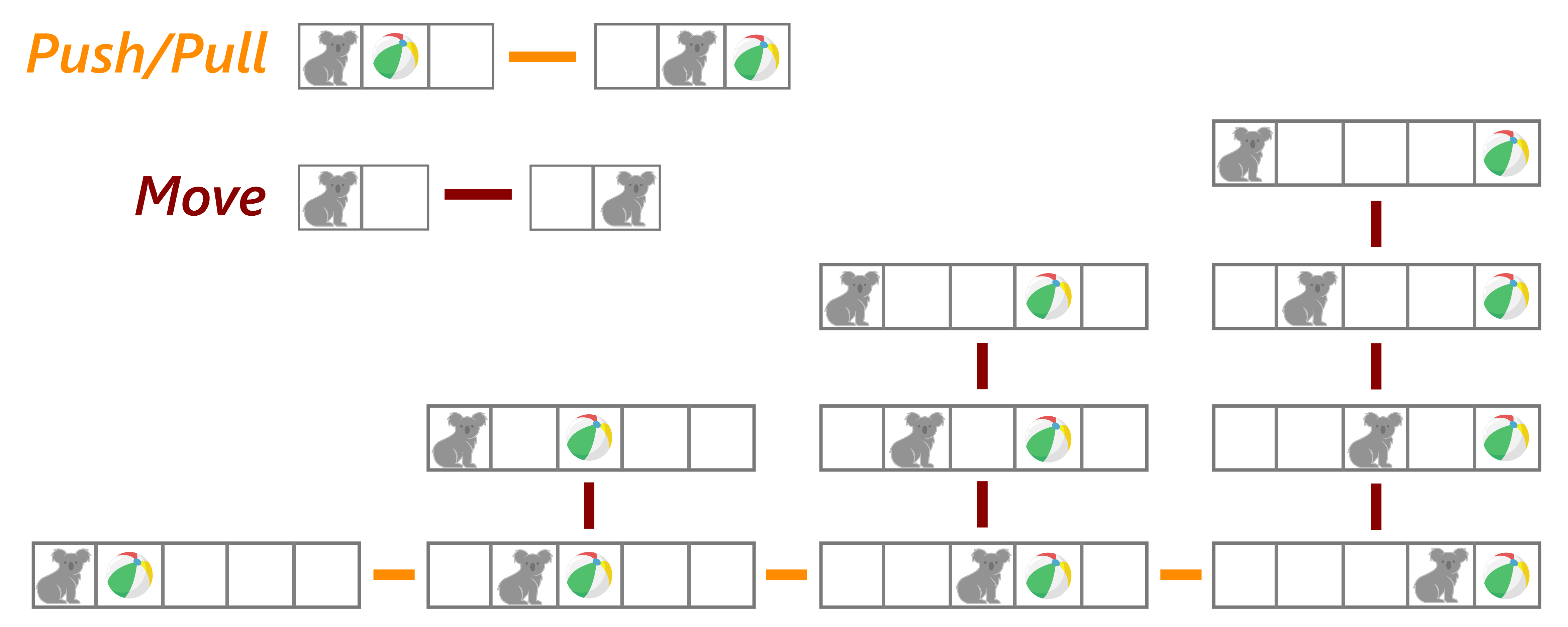}
    \end{center}
    \caption{An example $1\times 5$ gridworld with one agent and one object with two generators -- Push/Pull and Move -- and the resulting state graph. In the state graph, edge colours indicate the generator type which relabels the gridworld.}
    \label{fig:staircase-SC}
\end{figure}

In a typical reconfigurable system, there may be many admissible generators at a given state $s$. If the trace of an admissible generator $\phi_1$ is disjoint from the support of another admissible generator $\phi_2$, then $\phi_2$ remains admissible at $\phi_1[s]$. This is because the relabelling by $\phi_1$ does not interfere with the labels on $SUP(\phi_2)$. More generally, a set of admissible generators $\{\phi_1, \ldots, \phi_n\}$ at a state $s$ \emph{commutes} if $SUP(\phi_i)\cap TR(\phi_j) = \emptyset$ for all $i \neq j$. When this holds, these generators can be applied independently of one another, and the resulting state does not depend on the order in which they are applied. A simple example of this in the context of gridworlds is a large room with $n$ agents spread sufficiently far apart to allow for independent simultaneous movement.


\begin{wrapfigure}{R}{0.5\textwidth}
  \begin{center}
    \vspace{-\baselineskip}
    \includegraphics[width=\linewidth]{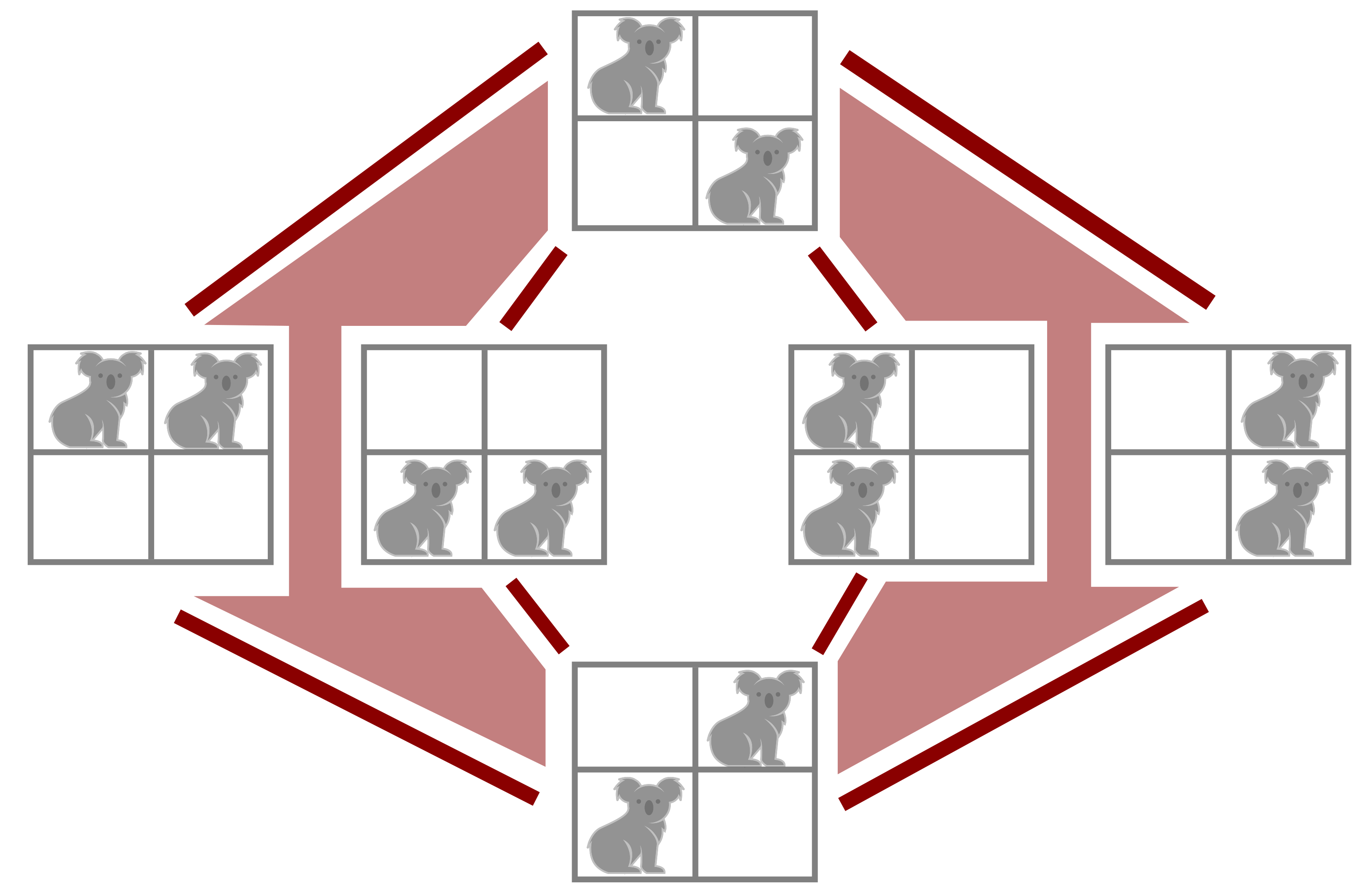}
  \end{center}
  \caption{State complex of a $2\times 2$ gridworld with two agents. Shading indicates squares attached to the surrounding 4--cycles.}
  \label{fig:2x2-commuting}
\end{wrapfigure}

Abrams, Ghrist \& Peterson represent this mutual commutativity by adding higher dimensional cubes to the state graph to form a cube complex called the \emph{state complex}. We give an informal definition here, and refer to their papers for the precise formulation \cite{AbramsGhrist,GhristPeterson}. Further background on cube complexes can be found in \cite{Wise-raag, Sageev-cube}. If $\{\phi_1, \ldots, \phi_n\}$ is a set of commuting admissible generators at a state $s$ then there are $2^n$ states that can be obtained by applying any subset of these generators to $s$. These $2^n$ states form the vertices of an $n$--cube in the state complex. Each $n$--cube is bounded by $2n$ faces, where each face is an $(n-1)$--cube: by disallowing a generator $\phi_i$, we obtain a pair of faces corresponding to those states (in the given $n$--cube) that agree with one of the two respective local states of $\phi_i$ on $SUP(\phi_i)$.

\begin{definition}[State complex]
The \emph{state complex} $\mathcal{S}$ of a reconfigurable system is the cube complex constructed from the state graph $\mathcal{S}^{(1)}$ by inductively adding cubes as follows: whenever there is a set of $2^n$ states related by a set of $n$ admissible commuting generators, we add an $n$--cube so that its vertices correspond to the given states, and so that its $2n$ boundary faces are identified with all the possible {$(n-1)$--cubes} obtained by disallowing a generator. In particular, every cube is uniquely determined by its vertices.
\label{def:state-complex}
\end{definition}

In our gridworlds setting, each generator involves exactly one agent. This means commuting generators can only occur if there are multiple agents. A simple example of a state complex for two agents in a $2\times 2$ room is shown in Figure \ref{fig:2x2-commuting}. Note that there are six embedded $4$--cycles in the state graph, however, only two of these are filled in by squares: these correspond to independent movements of the agents, either both horizontally or both vertically.

\section{Exploring gridworlds with state complexes}\label{sec:examples}

To compute the state complex of a (finite) gridworld, we first initialise an empty graph $\mathcal{G}$ and an empty `to-do' list $\mathcal{L}$. As input, we take a chosen state of the gridworld to form the first vertex of $\mathcal{G}$ and also the first entry on $\mathcal{L}$. The state complex is computed according to a breadth-first search by repeatedly applying the following:
\begin{itemize}
    \item Let $v$ be the first entry on $\mathcal{L}$. List all admissible generators at $v$. For each such generator $\phi$:
    \begin{itemize}
        \item If $\phi[v]$ already appears as a vertex of $\mathcal{G}$, add an edge between $v$ and $\phi[v]$ (if it does not already exist).
        \item If $\phi[v]$ does not appear in $\mathcal{G}$, add it as a new vertex to $\mathcal{G}$ and add an edge connecting it to $v$. Append $\phi[v]$ to the end of $\mathcal{L}$.
    \end{itemize}
    \item Remove $v$ from $\mathcal{L}$.
\end{itemize}
The process terminates when $\mathcal{L}$ is empty. The output is the graph $\mathcal{G}$.
When $\mathcal{L}$ is empty, we have fully explored all possible states that can be reached from the initial state. It may be possible that the true state graph is disconnected, in which case the above algorithm will only return a connected component $\mathcal{G}$. For our purposes, we shall limit our study to systems with connected state graphs. From the state graph, we construct the state complex by first finding all $4$--cycles in the state graph. Then, by examining the states involved, we can determine whether a given $4$--cycle bounds a square representing a pair of commuting moves.

 To visualise the state complex, we first draw the state graph using the Kamada--Kawai force-directed algorithm \cite{Kamada-Kawai} which attempts to draw edges to have similar length. We then shade the region(s) enclosed by $4$--cycles representing commuting moves. For ease of visual interpretation in our figures, we do not also shade higher-dimensional cubes, although such cubes are noticeable and can be easily computed and visualised if desired.

\begin{wrapfigure}{R}{0.55\textwidth}
    \centering
    \begin{center}
        \includegraphics[width=\linewidth]{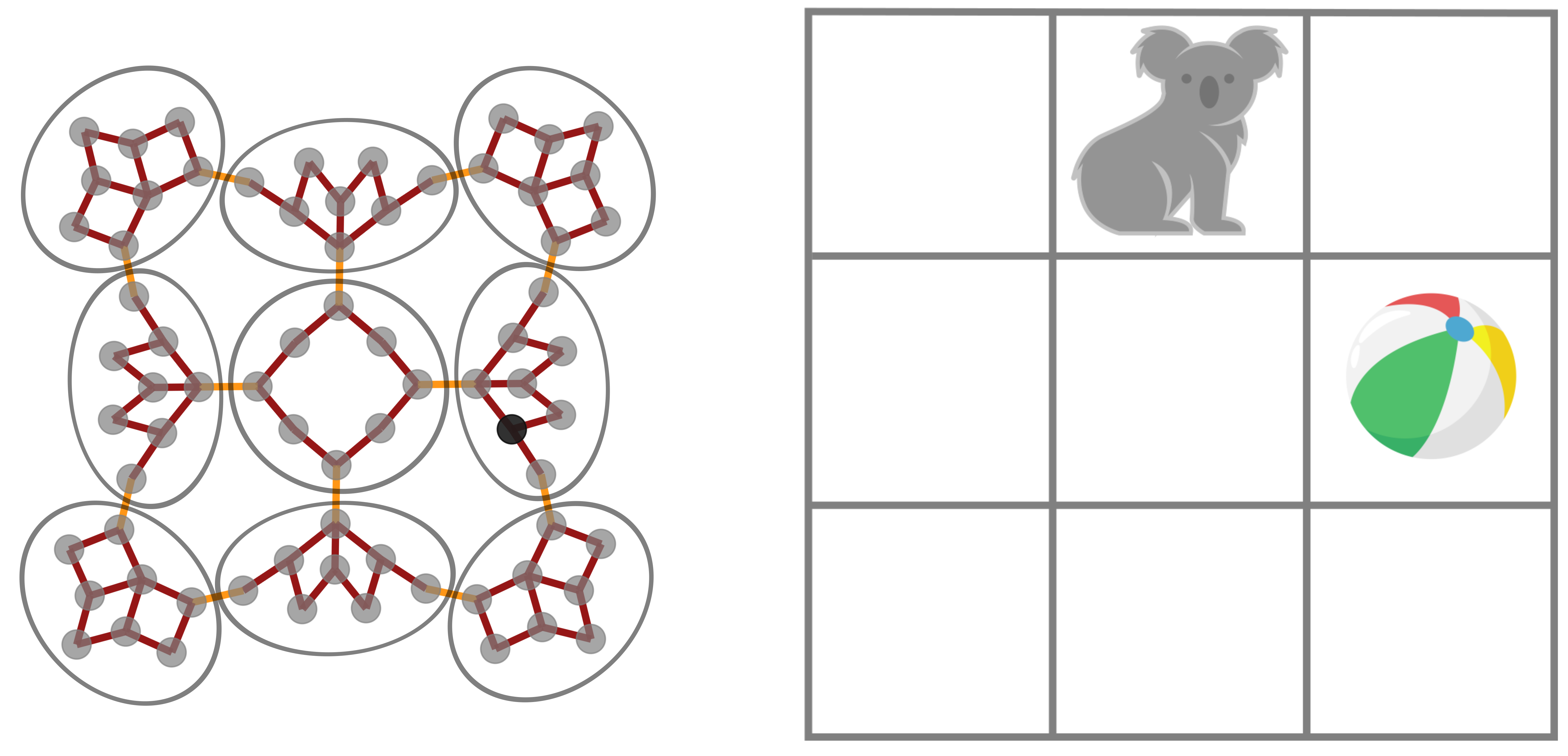}
    \end{center}
    \caption{State complex (left) of a $3\times 3$ gridworld with one agent and one object (right). The darker vertex in the state complex represents the state shown in the gridworld state on the right. Edges in the state complex are coloured according to their generator -- orange for Push/Pull and maroon for Move. Grey circles which group states where the ball is static have been added to illustrate the different scales of geometry.}
    \label{fig:petal-walk}
\end{wrapfigure}

Constructing and analysing state complexes of gridworlds is in and of itself an interesting and useful way of exploring their intrinsic geometry. For example, Figure \ref{fig:petal-walk} shows the state complex of a $3\times3$ gridworld with one agent and one object. The state complex reveals two scales of geometry: larger `blobs' of states organised in a $3\times 3$ grid, representing the location of the object; and, within each blob, copies of the room's remaining empty space, in which the agent may walk around and approach the object to Push/Pull. Each $12$--cycle `petal' represents a $12$--step choreography wherein the agent pushes and pulls the object around in a $4$--cycle in the gridworld. In this example, the state complex is the state graph, since there are no possible commuting moves.

The examples discussed thus far all have planar state graphs. 
Planarity does not hold in general -- indeed, the $n$--cube graph for $n \geq 4$ is non-planar, and a state graph can contain $n$--cubes if the gridworld has $n$ agents and sufficient space to move around. It is tempting to think that the state complex of a gridworld with more agents should therefore look quite different to one with fewer agents. However, Figure \ref{fig:common-complexes} shows this may not always be the case: there is a symmetry induced by swapping all \texttt{`agent'} labels with \texttt{`floor'} labels.

\begin{figure}[h]
    \centering
    \begin{center}
        \includegraphics[width=0.8\linewidth]{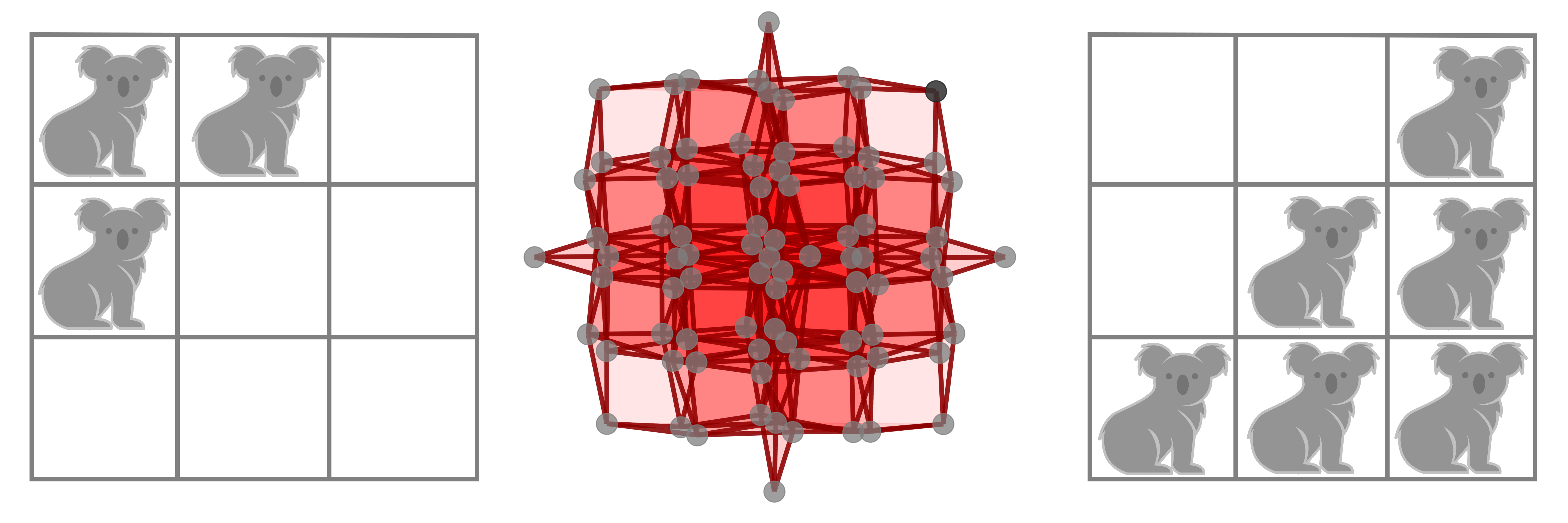}
    \end{center}
    \caption{State complex (centre) of a $3\times 3$ gridworld with three agents (left) and six agents (right). They share the same state complex due to the \texttt{`agent'} $\leftrightarrow$ \texttt{`floor'} label inversion symmetry.}
    \label{fig:common-complexes}
\end{figure}

\section{Dancing with myself}\label{sec:dance}

The state complex of a gridworld with $n$ agents can be thought of as a discrete analogue of the configuration space of $n$ points on the 2D--plane. However, there is a problem with this analogy: there can be `holes' created by $4$--cycles in the state complex where a single agent walks in a small square-shaped dance by itself, as shown in Figure \ref{fig:dance}.

The presence of these holes would suggest something meaningful about the underlying gridworld's intrinsic topology, e.g., something obstructing the agent's movement at that location in the gridworld that the agent must move around. In reality, the environment is essentially a (discretised) 2D--plane with nothing blocking the agent from traversing those locations. Indeed, these `holes' are uninteresting topological quirks which arise due to the representation of the gridworld as a graph. We therefore deviate from the original definition of state complexes by Abrams, Ghrist \& Peterson \cite{AbramsGhrist,GhristPeterson} and choose to fill in these `dance' $4$--cycles with squares.\footnote{Ghrist and Peterson themselves ask if there could be better ways to complete the state graph to a higher-dimensional object with better properties (Question 6.4 in \cite{GhristPeterson}).}

\begin{figure}
    \centering
    \begin{center}
        \includegraphics[width=0.75\linewidth]{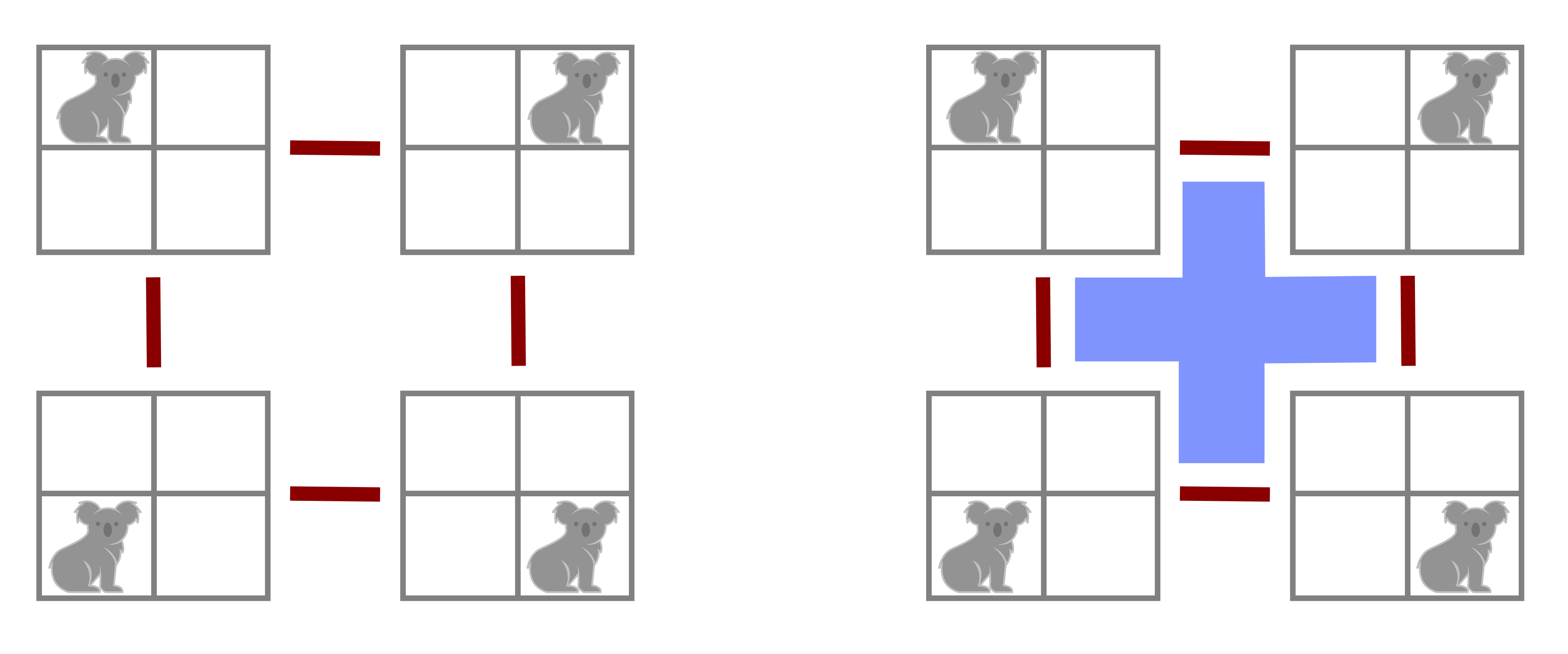}
    \end{center}
    \caption{State complex of a $2\times 2$ gridworld with one agent under the original definition of Abrams, Ghrist \& Peterson \cite{AbramsGhrist,GhristPeterson} (left) and with our modification (right). The blue shading is a filled in square indicating a \textit{dance}.}
    \label{fig:dance}
\end{figure}

\noindent Formally, we define a \textbf{dance} $\delta$ to comprise the following data:
\begin{itemize}
    \item the support $SUP(\delta)$ given by a $2\times 2$ subgrid in the gridworld,
    \item four local states defined on $SUP(\delta)$, each consisting of exactly one agent label and three floor labels, and
    \item four Move generators, each of which transitions between two of the four local states (as in Figure \ref{fig:dance}).
\end{itemize}
We say that $\delta$ is \emph{admissible} at a state $s$ if $s|_{SUP(\delta)}$ agrees with one of the four local states of $\delta$. Moreover, these four local states are precisely the states that can be reached when we apply some combination of the four constituent Moves. We do not define the trace of a dance, however, we may view the trace of each of the four constituent Moves as subgraphs of $SUP(\delta)$.

The notion of commutativity can be extended to incorporate dancing. Suppose that we have a set $\{\phi_1, \ldots, \phi_l, \delta_1, \ldots, \delta_m\}$ of $l$ admissible generators and $m$ admissible dances at a state $s$. We say that this set \emph{commutes} if the supports of its elements are pairwise disjoint. When this holds, there are $2^{l + 2m}$ possible states that can be obtained by applying some combination of the generators and dances to $s$: there are two choices of local state for each $\phi_i$, and four for each $\delta_j$. We capture this extended notion of commutativity by attaching additional cubes to the state complex to form our modified state complex.

\begin{definition}[Modified state complex]
The \emph{modified state complex} $\mathcal{S}'$ of a gridworld is the cube complex obtained by filling in the state graph $\mathcal{S}^{(1)}$ with higher dimensional cubes whenever there is a set of commuting moves or dances. Specifically, whenever a set of $2^{l + 2m}$ states are related by a commuting set of $l$ generators and $m$ dances, we add an $n$--cube having the given set of states as its vertices, where $n = l + 2m$. Each of the $2n$ faces of such an $n$--cube is identified with an $(n-1)$--cube obtained by either disallowing a generator $\phi_i$ and choosing one of its two local states, or replacing a dance $\delta_j$ with one of its four constituent Moves.
\end{definition}

Our modification removes uninteresting topology. This can be observed by examining $4$--cycles in $\mathcal{S}'$. On the one hand, some $4$--cycles are trivial (they can be `filled in'): \emph{dancing-with-myself} $4$--cycles, and \emph{commuting moves} (two agents moving back and forth) $4$--cycles (which were trivial under the original definition). These represent trivial movements of agents relative to one another. On the other hand, there is a non-trivial $4$--cycle in the state complex for two agents in a $2\times 2$ room, as can be seen in the centre of Figure \ref{fig:2x2-commuting} (here, no dancing is possible so the modified state complex is the same as the original). This $4$--cycle represents the two agents moving half a `revolution' relative to one another -- indeed, performing this twice would give a full revolution. (There are three other non-trivial $4$--cycles, topologically equivalent to this central one, that also achieve the half-revolution.)

In a more topological sense\footnote{By considering the fundamental group.}, by filling in such squares and higher dimensional cubes, our state complexes capture the non-trivial, essential relative movements of the agents. This can be used to study the braiding or mixing of agents, and also allows us to consider path-homotopic paths as `essentially' the same. One immediate difference this creates with the original state complexes is a loss of symmetries like those shown in Figure \ref{fig:common-complexes}, since there is no label inversion for a dance when other agents are crowding the dance-floor.

\section{Gromov's Link Condition}\label{sec:Gromov}

The central geometric characteristic of Abrams, Ghrist, \& Peterson's state complexes is that they are \emph{non-positively curved} (NPC). Indeed, this local geometric condition is conducive for developing efficient algorithms for computing geodesics. However, with our modified state complexes, this NPC geometry is no longer guaranteed -- we test for this on a vertex-by-vertex basis using a classical geometric result due to Gromov (see also Theorem 5.20 of \cite{BH-NPC} and \cite{Sageev-cube}). 

\begin{theorem}[Gromov's Link Condition \cite{Gromov}]
A finite-dimensional cube complex is NPC if and only if the link of every vertex is a flag simplicial complex. \qed
\end{theorem}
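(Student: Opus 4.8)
The plan is to treat this as the classical theorem it is and to organise the argument around two successive reductions, the first geometric and the second combinatorial. Let $X$ be the finite-dimensional cube complex, and take \emph{NPC} to mean locally $\mathrm{CAT}(0)$. The first observation is that, since $X$ is built from finitely many isometry types of cells, the $\mathrm{CAT}(0)$ comparison inequality holds automatically on the interior of any single cube (Euclidean cubes are $\mathrm{CAT}(0)$) and, more generally, at any point lying in the interior of a positive-dimensional face. Curvature can therefore only be obstructed at the vertices, so it suffices to decide local $\mathrm{CAT}(0)$-ness at each vertex $v$ separately.

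The key geometric step is to pass from a neighbourhood of $v$ to the link $\mathrm{Lk}(v)$. First I would show that a small metric ball around $v$ is isometric to a neighbourhood of the cone point in the Euclidean cone over $\mathrm{Lk}(v)$, where $\mathrm{Lk}(v)$ carries its natural all-right piecewise-spherical metric: each edge (corresponding to an edge of $X$ at $v$) has length $\pi/2$, and an $n$-cube at $v$ contributes an all-right spherical $(n-1)$-simplex. I would then invoke Berestovskii's cone criterion (see \cite{BH-NPC}): the Euclidean cone over a metric space $Y$ is $\mathrm{CAT}(0)$ if and only if $Y$ is $\mathrm{CAT}(1)$. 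Combining these, $X$ is NPC if and only if $\mathrm{Lk}(v)$ is $\mathrm{CAT}(1)$ for every vertex $v$.

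It remains to prove the purely combinatorial statement that an all-right spherical complex $L$ is $\mathrm{CAT}(1)$ if and only if it is flag; the theorem then follows, since for a cube complex the flag condition on $\mathrm{Lk}(v)$ says exactly that pairwise-adjacent edges at $v$ (edges pairwise spanning squares) always bound a cube. One direction is the obstruction: if $L$ is not flag there is an empty simplex -- vertices pairwise joined by edges of length $\pi/2$ but not spanning a face -- and the smallest such empty simplex yields an isometrically embedded geodesic loop of length $< 2\pi$ (an empty triangle, for instance, gives a loop of length $3\pi/2$), violating $\mathrm{CAT}(1)$. For the converse I would induct on dimension, reapplying the cone reduction one level down: the link of a vertex of $L$ is again an all-right spherical complex of strictly lower dimension, flagness is inherited by vertex links, so by induction those lower links are $\mathrm{CAT}(1)$, whence $L$ is \emph{locally} $\mathrm{CAT}(1)$.

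The hard part will be the final upgrade from local to global $\mathrm{CAT}(1)$ for a flag $L$. Unlike the $\mathrm{CAT}(0)$ setting there is no clean Cartan--Hadamard theorem at curvature $+1$: a locally $\mathrm{CAT}(1)$ space can fail to be globally $\mathrm{CAT}(1)$ precisely through the existence of a short closed geodesic. The crux is therefore a \emph{systolic} estimate -- showing that flagness forces every closed geodesic in $L$ to have length at least $2\pi$ -- proved by analysing how a putative short geodesic loop passes through the simplices and using the absence of empty simplices to exclude the length-economising shortcuts that non-flag complexes permit. This short-loop analysis, together with the bookkeeping needed to check that flagness passes to and from vertex links throughout the induction, is where essentially all the difficulty lies; the two reductions above are formal once the cone criterion is in hand.
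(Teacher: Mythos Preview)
The paper does not prove this statement: it is quoted as a classical result of Gromov, with the proof delegated to \cite{Gromov} (and Theorem~5.20 of \cite{BH-NPC} and \cite{Sageev-cube}), hence the bare \qed. There is nothing in the paper to compare your argument against.

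That said, your sketch is the standard route and is correct in outline. The two reductions --- to vertices via the Euclidean-cone description of a neighbourhood, then Berestovskii's criterion to reduce NPC to $\mathrm{CAT}(1)$ on the all-right spherical link --- are exactly what Bridson--Haefliger do. Your identification of the real content, the systole bound $\mathrm{sys}(L)\geq 2\pi$ for flag all-right spherical complexes (proved by induction on dimension, with flagness passing to links), is also on the mark. One small refinement: your claim that curvature ``can only be obstructed at the vertices'' is slightly glib as stated --- at a point in the interior of a positive-dimensional face, the space of directions is a spherical join of a round sphere with the link of that face, and one still needs the link of the face to be $\mathrm{CAT}(1)$. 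This is handled automatically by the same inductive argument (links of faces are again flag all-right complexes of lower dimension), but it is worth saying explicitly rather than asserting that non-vertex points are trivially fine.
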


\begin{wrapfigure}{R}{0.56\textwidth}
    \centering
    \vskip -5mm
    \begin{center}
    \includegraphics[width=\linewidth]{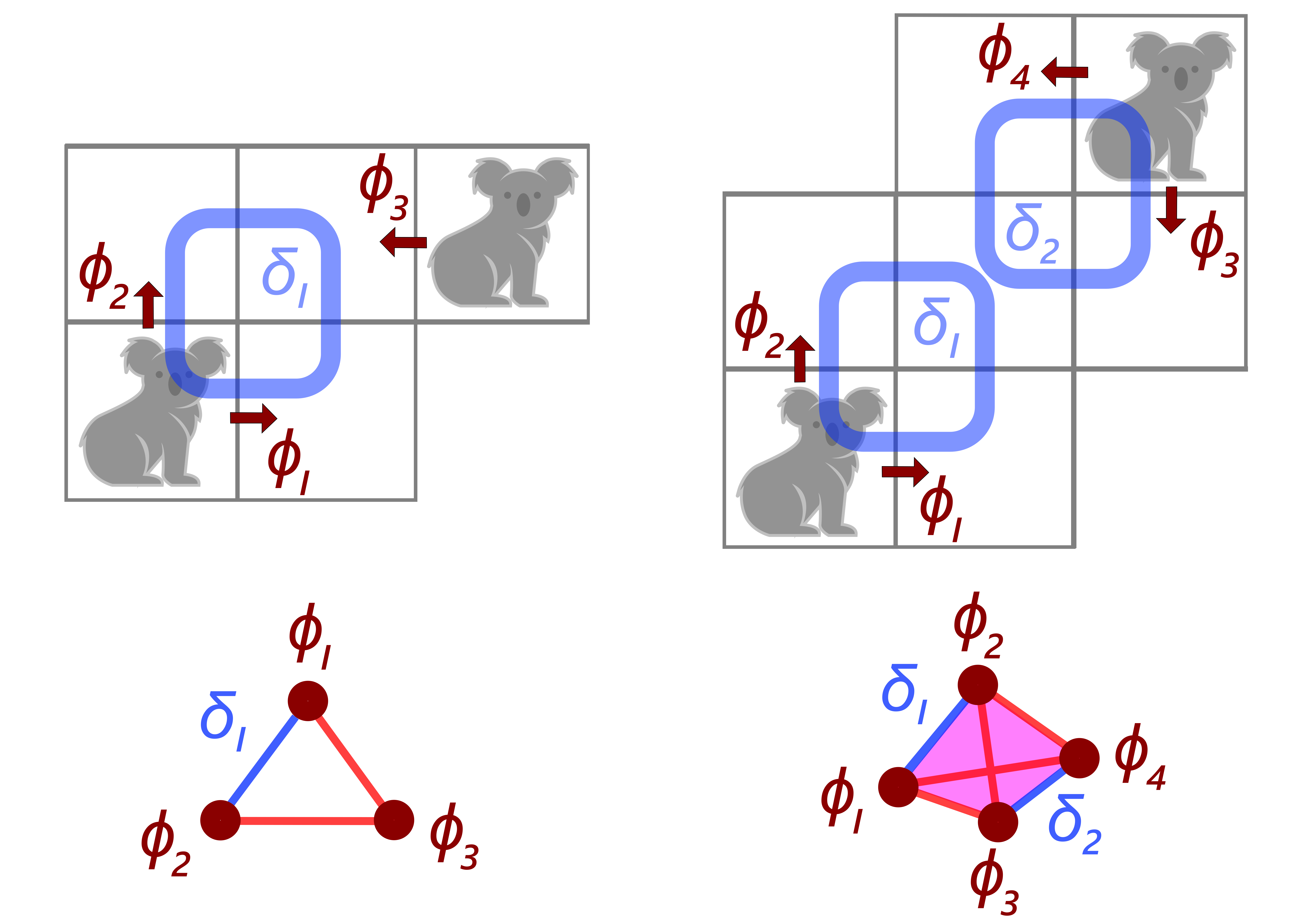}
    \end{center}
    \caption{The two situations which lead to failure of Gromov's Link Condition in multi-agent gridworlds. Maroon arrows indicate admissible moves and blue squares indicate admissible dances. Note that in the links (bottom row), the triangle is missing in the left example, while the (solid) tetrahedron is missing in the right (however, all 2D faces are present). This is due to the respective collections of moves and dances failing to commute -- an agent interrupts the other's dance (left) or two dances collide (right).}
    \label{fig:missing-simplices}
\end{wrapfigure}

We provide a brief mathematical background on cube complexes and the finer details of Gromov's Link Condition in Appendix \ref{sec:cube-prelims}. For our current purposes, it is sufficient to know that under the Abrams, Ghrist \& Peterson setup, if $v$ is a state in $\mathcal{S}$ then the vertices of its link $lk(v)$ represent the possible admissible generators at $v$. Since cubes in $\mathcal{S}$ are associated with commuting sets of generators, each simplex in $lk(v)$ represents a set of commuting generators. Gromov's Link Condition for $lk(v)$ can be reinterpreted as follows: whenever a set of admissible generators is \emph{pairwise} commutative, then it is \emph{setwise} commutative. Using this, it is straightforward for Abrams, Ghrist \& Peterson to verify that this always holds for their state complexes (see Theorem 4.4 of \cite{GhristPeterson}).

For our modified states complexes, the situation is not as straightforward. The key issue is that our cubes do not only arise from commuting generators -- we must take dances into account. Indeed, when attempting to prove that Gromov's Link Condition holds, we discovered some very simple gridworlds where it actually fails; see Figure \ref{fig:missing-simplices} and Appendix \ref{subsec:experiments-in-small-rooms}.

Failure of the Link Condition can indicate available moves at some state that cannot be safely performed simultaneously and independently without risking collisions between labels. Another interpretation of positive curvature in this context is something akin to what real-time computer strategy games call `fog of war' (distance-dependent limiting of observations which extends from the player-controlled agents), and more specifically the viewable distance from an agent's line-of-sight. Such fog makes AI systems operating in such environments particularly challenging, although remarkable success has been achieved in games like StarCraft \cite{AlphaStar}.

Despite this apparent drawback, we nevertheless show that Figure \ref{fig:missing-simplices} accounts for all the possible failures of Gromov's Link Condition in the setting of agent-only gridworlds\footnote{While writing this paper, the first author was involved in two scooter accidents -- collisions involving only agents (luckily without serious injury). So, while this class of gridworlds is strictly smaller than those also involving objects or other labels, it is by no means an unimportant one. If only the scooters had Gromov's Link Condition checkers!}.

\begin{theorem}[Gromov's Link Condition in the modified state complex]
Let $v$ be a vertex in the modified state complex $\mathcal{S}'$ of an agent-only gridworld. Then
\begin{itemize}
 \item $lk(v)$ satisfies Gromov's Link Condition if and only if it has no empty $2$--simplices nor $3$--simplices\footnote{In other words, if there are no `hollow' triangles or tetrahedra like those in Figure \ref{fig:missing-simplices}.}, and
 \item if $lk(v)$ fails Gromov's Link Condition then there exist a pair of agents whose positions differ by either a knight move or a $2$--step bishop move (as in Figure \ref{fig:missing-simplices}).
\end{itemize}
\label{theorem:link-checking}
\end{theorem}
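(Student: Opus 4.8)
The plan is to translate Gromov's Link Condition into a purely combinatorial statement about cliques in the $1$--skeleton of $lk(v)$, and then classify exactly when a clique fails to bound a simplex. First I would pin down the structure of $lk(v)$ for an agent-only gridworld. Its vertices are the admissible Move generators at $v$, each recorded as a pair (agent, direction). By the definition of $\mathcal{S}'$, a set of moves spans a simplex precisely when it is the $v$--corner of some cube, i.e.\ when the moves partition into single generators and dance-pairs whose supports are pairwise disjoint; here a dance of agent $A$ contributes the two perpendicular moves leaving $v$ and has support equal to its $2\times 2$ block, including the \emph{far} corner $s_A$ diagonally opposite $A$, while a single move has support equal to its two cells. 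In particular: two moves of distinct agents are joined by an edge iff their two-cell supports are disjoint, and two moves of the same agent are joined by an edge iff they are perpendicular and complete to an admissible dance. Since opposite moves of one agent are never perpendicular, each agent contributes at most two vertices to any clique, and if two then they must form a dance-pair.

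The key lemma to establish is that a clique $C$ bounds a simplex iff the supports of its constituent generators and dances are pairwise disjoint. The non-trivial direction is that pairwise adjacency (the clique condition) need not force setwise disjointness, and I would isolate exactly where the gap occurs. Two single moves that are adjacent already have disjoint two-cell supports, so no obstruction arises between single-move agents. The only gaps come from the extra cell that a dance block contributes beyond its two leaving-moves, namely the far corner $s_A$: a pairwise-adjacent clique can fail to fill only if some support meets another agent's dance block in such a far corner.

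I would then enumerate these intersection patterns using coordinates, which I expect to be the technical heart of the argument. For a single move $\psi$ of agent $B$ against a dance-pair of agent $A$ with block $\{p,q,r,s_A\}$, adjacency forces $\mathrm{supp}(\psi)$ disjoint from $\{p,q,r\}$, so the only way to miss the required $3$--cube is $s_A \in \mathrm{supp}(\psi)$; since $q,r,s_A$ are floor, $B$ must sit at a cell adjacent to $s_A$ but outside the block and move into $s_A$, and a short case check shows $B$'s position differs from $p$ by a knight move. This yields a hollow triangle on the three moves $\{\phi_1,\phi_2,\psi\}$. For two dance-pairs (agents $A,B$), adjacency forces $\{p,q,r\}$ and $\{b,b_1,b_2\}$ disjoint, so the two blocks can meet only in a common far corner $s_A=s_B$; enumerating the $2\times 2$ blocks through a fixed corner shows the agents occupy opposite ends of a diagonal, i.e.\ differ by a $2$--step bishop move, and this yields a hollow tetrahedron whose four triangular faces are all present.

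Finally I would close the logic with a minimality argument. Any clique that fails to fill contains a pair of supports that intersect, hence (by the enumeration) contains either a knight-triangle or a bishop-tetrahedron as a sub-clique that itself fails to fill. Consequently every minimal empty simplex is exactly one of these, so it has dimension $2$ or $3$; equivalently, if $lk(v)$ contains no empty $2$--simplex and no empty $3$--simplex then it has no empty simplex at all and is flag, giving the first bullet, while the accompanying configurations give the knight/bishop pair of the second bullet. The main obstacle is the coordinate enumeration of the third step: one must verify that the clique-adjacency constraints genuinely eliminate every intersection pattern other than the single-far-corner ones, and that each surviving pattern pins the two agents into precisely a knight or $2$--step bishop offset.
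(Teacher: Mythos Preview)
Your proposal is correct and follows essentially the same approach as the paper's proof: both reduce the flag condition to classifying the minimal empty simplices, carry out the same case analysis (three moves with one dancer versus one mover giving the knight pattern; two dancers giving the bishop pattern), and observe that singleton moves alone never obstruct. The only organisational difference is that the paper phrases the reduction as a short induction on $n\geq 4$ using the classification of $3$--simplices, whereas you frame it as a ``key lemma'' plus a minimality argument; these are equivalent packagings of the same content.
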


We provide a proof in Appendix \ref{subsec:proof}. Consequently, if the Link Condition fails at all, it must fail at dimension 2 or 3. This can be interpreted as saying that we only need a bounded amount of foresight to detect potential collisions: under fog-of-war, each agent needs a line-of-sight of only four moves.

Positive curvature could indicate collisions between any specified labels (e.g., objects), however, for this interpretation to be valid we would need to carefully identify which other potential cycles in the state complex ought to be filled in. Doing this in a `natural' way is in itself a non-trivial task, and is the subject of further investigation.

\section{Experiments and applications}\label{sec:experiments}

Although our main contribution is theoretical, we conduct some small initial experiments to demonstrate the type of information which can be captured in the geometry and topology (see Appendix \ref{subsec:experiments-in-small-rooms}). To run these experiments, we developed and used a custom Python-based tool (detailed in Appendix \ref{subsec:python}). Our focus on small rooms is largely expository, i.e., they are the simplest non-trivial examples illustrating the key features we want to isolate, and naturally reoccur in all larger rooms. Our intention is also to demonstrate a combinatorial explosion in the number of states. We don't recommend constructing the entire state complex in practical applications (indeed, to implement addition of integers on a computer, it is infeasible and unnecessary to construct \emph{all} integers).

\begin{remark}
By a simple counting argument, one can deduce the total number of states in a gridworld.
For an agent-only gridworld with $n$ cells and $k$ agents, there is a total of $\binom{n}{k}$ states.
If there are $n$ cells, $k$ agents, and $j$ objects, then there are $\binom{n}{k}\binom{n-k}{j}$ states.
Thus, even for a moderately sized $10 \times 10$ room with 50 agents, there are $\binom{100}{50} \approx 1.008 \times 10^{29}$ vertices in the state complex.
\end{remark}

By Theorem \ref{theorem:link-checking}, checking if $lk(v)$ satisfies Gromov's Link Condition requires computing the link only up to dimension $3$ and then checking whether it is a flag complex; if not, we count the number of empty simplices. Checking this for a given vertex in the state complex is not too computationally demanding, however when a state complex has many vertices it becomes more difficult. In practical applications, such as calculating collision-avoiding navigation routes, it is -- again, by Theorem \ref{theorem:link-checking} -- only necessary to construct a small local subcomplex. But perhaps even more importantly, to detect potential collisions between agents, it is not even necessary to construct $lk(v)$, since Theorem \ref{theorem:link-checking} provides a computational shortcut: just check for supports of knight or two-step bishop moves between agents.



By using Gromov's Link Condition, we can identify a precise measure of how far ahead agents ought to look in order to safely proceed without fear of collisions. Appendix \ref{subsec:experiments-in-small-rooms} gives a summary analysis of a $3\times 3$ room with varying numbers of agents.
We noticed several symmetries. Commuting moves and the number of states have a symmetry about $4.5$ agents (due to the label-inversion symmetry as previously illustrated in Figure \ref{fig:common-complexes}). However, curiously, the number of dances has a symmetry about $3.5$ agents. This difference leads to the asymmetrical distribution of positive curvature and failures of Gromov's Link Condition -- which, while maximal for $3$ agents as a proportion of total states, exhibited the highest mean failure rate for $4$ agents. 

This shows that, heuristically, we expect most states to satisfy NPC (see Appendix \ref{subsec:experiments-in-small-rooms}), and so existing greedy algorithms \cite{Ardila-geodesics} for calculating geodesics will work well in most situations. However, to implement an efficient, collision-free path-finding algorithm in our modified state complexes, we need to add an additional check. Specifically, when we are near a potentially dangerous state, we should implement a predefined `detour' to avoid the collision, which can be done on a local basis using the identified supports which lead to positive curvature (as in Figure \ref{fig:missing-simplices}).

\section{Conclusions and future directions}

This study presents novel applications of tools from geometric group theory and combinatorics to the AI research community, opening new ways for recasting and analysing AI problems as geometric ones. Using these tools, we show an example of how the intrinsic geometry of a task space serendipitously embeds safety information and makes it possible to determine how far ahead in time an AI system needs to observe to be guaranteed of avoiding dangerous actions.

Leike et al. \cite{SafetyGridworlds} show deep reinforcement learning agents cannot solve many AI safety problems specified on gridworlds, e.g., minimising unwanted side-effects or ensuring robustness to agent self-modification. Having described the agent-only case in this study, there is now ripe opportunity to account for positive curvature or other geometric features arising due to other labels or generators (actions) present in specified AI safety problems, e.g., agents pushing/pulling objects, pressing buttons, modifying their form or behaviour, rewards/punishments, opening/unlocking doors, etc.. By considering \emph{directed} modified state complexes, irreversible actions can be captured by `invariant subcomplexes' (i.e., you can’t escape from them), allowing geometric study of the tree/flowchart of irreversible actions and related recurrence/transience. Braiding can be used to study route planning, back-tracking, cooperation, assembly, and topological entropy in congestion \cite{GhristBraids}. Numerous extensions are possible, allowing us to study and geometrically represent further problems 
with a view to developing 
efficient, geometrically-inspired local algorithms without the need for training.

Do learning algorithms already implement such geometrically-inspired algorithms, the related geometry, or approximations thereof? To find out, we are investigating how modified state complexes map to learned internal representations of neural networks trained to predict multi-agent gridworld dynamics. This mapping connects the geometry and topology of a task space directly to optimisation procedures and learning trajectories in latent representation spaces, highlighting unexpected topological and geometric differences and opportunities for deeper insight and improvement of optimisation procedures, in the spirit of \cite{Naitzat2020,Zhao2022}. We can also compare biological optimisation processes and internal representations of allocentric and egocentric navigation \cite{Burgess2006,Gardner2022}, and how this interacts with the position of other agents \cite{Duvelle2018,Sutherland2020}.



From a more mathematical perspective, state complexes of gridworlds give rise to an interesting class of geometric spaces. It would be worthwhile to investigate their geometric and topological properties to more deeply understand various aspects of multi-agent gridworlds. For example, for a gridworld with $n$ agents in a sufficiently large room, we hypothesise that the modified state complex should be a classifying space for the $n$--strand braid group. This is clearly false when the room is packed full of agents (in which case the state complex is a single point), so it may be fruitful to determine if there is some `critical' density at which a topological transition occurs.

Using the \emph{failure} of Gromov's Link Condition in an essential way appears to be a relatively unexplored approach. 
Indeed, much of the mathematical literature concerning cube complexes focusses on showing that the Link Condition always holds. 
To our knowledge, the only other works which go against this trend are \cite{AbramsGhrist}, in which failure detects global disconnection of a metamorphic system, and \cite{BDT-poly}, where failure detects non-trivial loops on topological surfaces. It would be interesting to explore cube complexes arising in other settings where failure captures critical information.

A limitation of our work is that we have so far only explored very simple AI environments. Further work is needed to expand the framework and results to more general, sophisticated, and real-world environments. For this reason, although our work provides new geometric perspectives, data, and potential algorithms for an important AI safety issue, we caution against hasty real-world implementation of the main results. To avoid potential negative societal impacts, it would still be important to perform rigorous checks and tests in application domains, since our results do not directly extend to situations beyond which the stated assumptions hold.

\begin{ack}
This project began as a rotation project during the first author's PhD programme. We wish to acknowledge and thank Anastasiia Tsvietkova for supporting a computational neuroscientist to do a research rotation in the Topology and Geometry of Manifolds Unit at OIST, in which the second author was a postdoctoral scholar. The first author thanks Nick Owad for 3D printing a model state complex. The second author acknowledges the support of the National Natural Science Foundation of China (NSFC 12101503). We thank anonymous reviewers for their suggestions to improve exposition.
\end{ack}

{
\small
\bibliographystyle{amsalpha}
\bibliography{refs}
}

\newpage

\appendix

\newpage 
\section{Appendix}

\subsection{Mathematical background on cube complexes and Gromov's Link Condition}\label{sec:cube-prelims}

Cube complexes and simplicial complexes are higher dimensional analogues of graphs that appear prominently in topology, geometric group theory, and combinatorics. Background on cube complexes can be found in \cite{Petra-cube-notes,Sageev-cube,Wise-raag}\footnote{Much of the literature in geometric group theory focusses primarily on non-positively curved cube complexes, whereas in our study, the presense of positive curvature plays a crucial role.}, while simplicial complexes are detailed in standard algebraic topology texts \cite{EdelsbrunnerHarer2010ComputationalTopology}. Here, we will only provide brief explanations in order to discuss Gromov's Link Condition.


\paragraph{Cube complexes.}
Informally, a cube complex is a space that can be constructed by gluing cubes together in a fashion not too dissimilar to a child's building blocks.
An $n$--cube is modelled on
\[\{(x_1, \ldots, x_{n}) \in \mathbb{R}^{n}~: 0 \leq x_i \leq 1 \textrm{ for all }i \}.\]
By restricting some co-ordinates to either 0 or 1, we can obtain lower dimensional subcubes.
In particular, an $n$--cube has $2^n$ vertices and is bounded by $2n$ faces which are themselves $(n-1)$--cubes.
A \emph{cube complex} $X$ is a union of cubes, where the intersection of every pair of distinct cubes is either empty, or a common subcube.

\paragraph{Simplicial complexes.}
Simplicial complexes are constructed in a similar manner to cube complexes, except that we use higher dimensional analogues of triangles or tetrahedra instead of cubes.
An $n$--dimensional simplex (or $n$--simplex) is modelled on
\[\{(x_1, \ldots, x_{n+1}) \in \mathbb{R}^{n+1}~: x_i \geq 0 \textrm{ for all }i,~ \sum_i x_i = 1 \};\]
this has $n+1$ vertices and is bounded by $n+1$ faces which are themselves $(n-1)$--simplices. For $n = 0, 1, 2, 3$, an $n$--simplex is respectively a point, line segment, triangle, and tetrahedron. A \emph{simplicial complex} $K$ is an object that can be constructed by taking a graph and then inductively filling in simplices of progressively higher dimension; this graph is called the \emph{$1$--skeleton} of $K$.
We require that every finite set of vertices in $K$ form the vertices of (or \emph{spans}) at most one simplex; thus simplices in $K$ are uniquely determined by their vertices. (This rules out loops or multi-edges in the $1$--skeleton.) 

\paragraph{Links.}
The local geometry about a vertex $v$ in a cube complex $X$ is captured by a simplicial complex known as its \emph{link} $lk(v)$.
Intuitively, this is the intersection of a small sphere centred at $v$ within $X$, and can be regarded as the space of possible directions emanating from $v$.
Each edge in $X$ emanating from $v$ determines a vertex ($0$--simplex) in $lk(v)$. If two such edges bound a `corner' of a square in $X$ based at $v$, then there is an edge ($1$--simplex) connecting the associated vertices in $lk(v)$. More generally, each `corner' of an $n$--cube incident to $v$ gives rise to an $(n-1)$--simplex in $lk(v)$; moreover, the boundary faces of the simplex naturally correspond to the faces of the cube bounding the corner. Since the cube complexes we consider have cubes completely determined by their vertices, each simplex in $lk(v)$ is also completely determined by its vertices. Figure \ref{fig:examples-links} illustrates four separate examples of links of vertices in cube complexes.

\begin{figure}[h]
    \centering
    \begin{center}
    \includegraphics[width=0.8\textwidth]{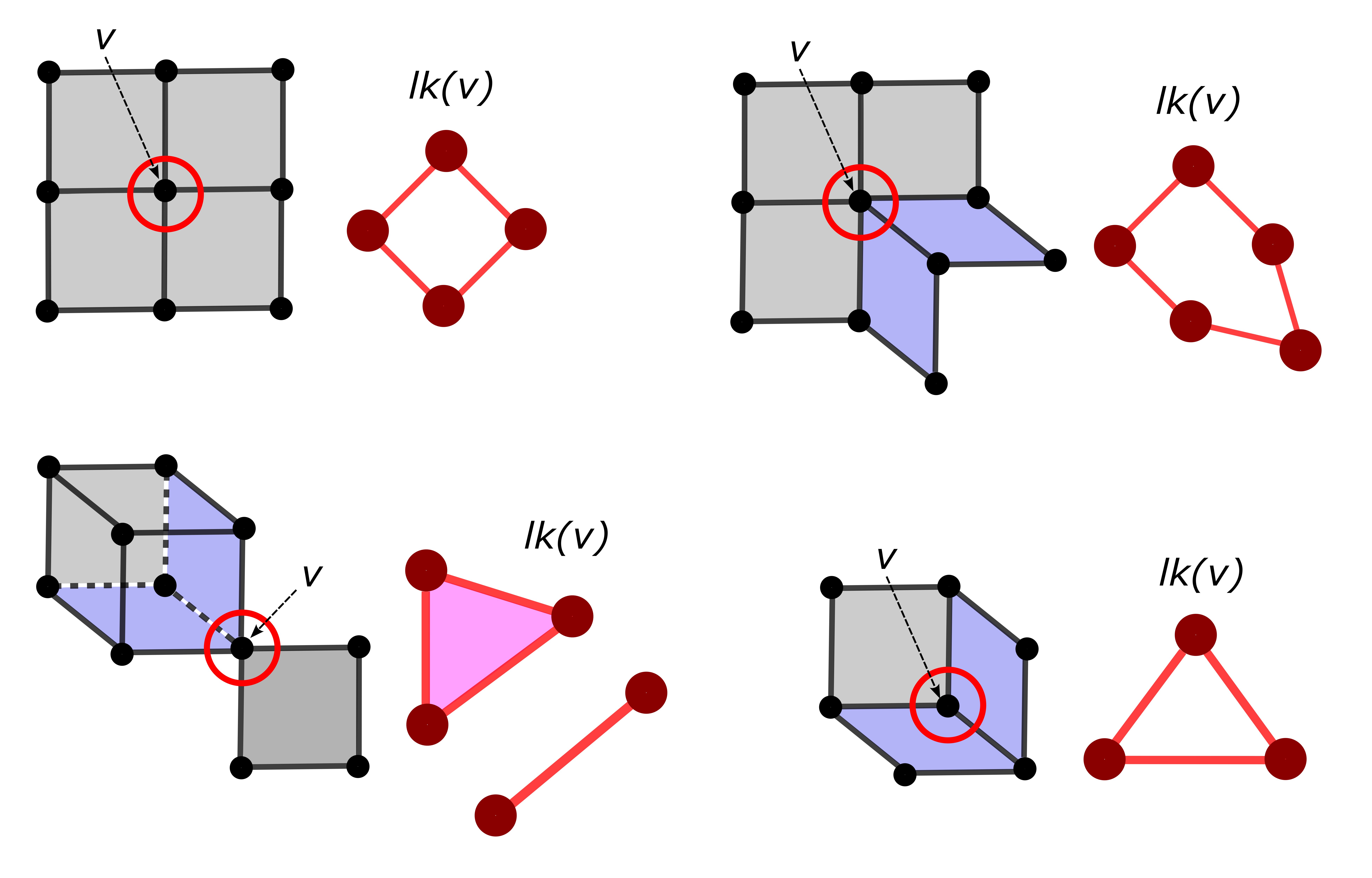}
    \end{center}
    \caption{Four separate examples of links of vertices in cube complexes. In the bottom-right example, where there is positive curvature, $lk(v)$ is a `hollow' triangle and is thus not a flag simplicial complex. For the other examples, $lk(v)$ is a flag complex and therefore, by Gromov's Link Condition, there is only negative or zero curvature. In the bottom-left example, the cube complex is a solid cube joined to a filled-in square at a common vertex $v$. }
    \label{fig:examples-links}
\end{figure}

\paragraph{Gromov's Link Condition.}
Local curvature in a cube complex can be detected by examining the combinatorial structure of the links of its vertices. Specifically, Gromov's Link Condition gives a method for proving that a cube complex is \emph{non-positively curved} (NPC)\footnote{In the sense that geodesic triangles are no fatter than Euclidean triangles \cite{BH-NPC}.}, where there is an absence of positive curvature. In the bottom-right example in Figure \ref{fig:examples-links}, where there is positive curvature, we observe a `hollow' triangle in its link. In the other examples of Figure \ref{fig:examples-links}, where there is only negative or zero curvature, there are no such hollow triangles (or hollow simplices).

This absence of `hollow' or `empty' simplices is formalised by the \emph{flag} property: a simplicial complex is \emph{flag} if whenever a set of $n+1$ vertices spans a complete subgraph in the $1$--skeleton, they must span an $n$--simplex. In particular, a flag simplicial complex is determined completely by its $1$--skeleton. If $v$ is a vertex in a cube complex $X$, then the flag condition on $lk(v)$ can be re-interpreted as a `no empty corner' condition for the cube complex: whenever we see (what appears to be) the corner of an $n$--cube, then the whole $n$--cube actually exists.

\begin{theorem*}[Gromov's Link Condition \cite{Gromov}]
A finite-dimensional cube complex $X$ is non-positively curved if and only if the link of each vertex in $X$ is a flag simplicial complex. \qed
\end{theorem*}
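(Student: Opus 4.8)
The plan is to analyse the combinatorial structure of $lk(v)$ directly in terms of the admissible moves at $v$, and to show that every minimal obstruction to flagness is forced to live in dimension $2$ or $3$. First I would set up the dictionary between $lk(v)$ and the gridworld. The vertices of $lk(v)$ are the admissible Move generators at $v$, each belonging to a unique agent and pointing in one of the four compass directions. Since every square of $\mathcal{S}'$ based at $v$ arises either from two commuting moves (necessarily of distinct agents, with disjoint supports) or from a single dance, two move-vertices span an edge in $lk(v)$ exactly when they either commute or are the two sides at $v$ of an admissible dance. More generally, a clique in the $1$-skeleton spans a simplex precisely when its associated collection of single moves and dances commutes, i.e.\ has pairwise disjoint supports; this is just the cube-existence condition from the definition of $\mathcal{S}'$.

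The key structural step is a bipartiteness observation for a single agent. Two moves of one agent share that agent's current cell, so they never commute; they can only be adjacent via a dance, which pairs one horizontal move ($E$ or $W$) with one vertical move ($N$ or $S$). Hence the induced subgraph on one agent's moves is bipartite between $\{E,W\}$ and $\{N,S\}$ -- a subgraph of $K_{2,2}$, which is triangle-free. Consequently any clique of $lk(v)$ contains at most two moves of any single agent, and when it contains two they form a dance. Thus every clique $C$ is described by a set of agents, each contributing either one single move (support: two cells) or one dance (support: a $2\times2$ block of four cells), with $|C| = p + 2q$ where $p$ agents move and $q$ agents dance.

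Next I would analyse a \emph{minimal} empty simplex $C$ (a clique not spanning a simplex, all of whose proper sub-cliques do span simplices). Since $C$ does not span a simplex, its collection fails to commute, so some pair of supports overlaps. No two single moves can overlap (adjacent single moves commute), so the overlapping pair is either single-vs-dance or dance-vs-dance. Here is the crux, and the main obstacle: the edges of $lk(v)$ detect commutativity only of \emph{individual} moves, whereas spanning a simplex requires commutativity of the full \emph{dances}, and the difference is exactly the diagonal cell of a dancing agent's block (the unique cell of the block lying in no individual move's support). I would show that when a single move $\phi^B$ overlaps a dance $\delta_A$, adjacency to both sides of $\delta_A$ forces $\phi^B$ to avoid the three non-diagonal cells, so the overlap sits at the diagonal; the triangle on $\phi^B$ and the two sides of $\delta_A$ is then itself empty. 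Similarly, when two dances overlap, pairwise commutativity of their four constituent moves confines the overlap to a single diagonal cell, so the tetrahedron on the four moves is empty. By minimality $C$ must coincide with this empty triangle or tetrahedron, giving $\dim C \in \{2,3\}$.

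Part 1 then follows immediately: flagness trivially rules out empty $2$- and $3$-simplices, and conversely if $lk(v)$ had any empty simplex a minimal one would have dimension $2$ or $3$, a contradiction. For Part 2 I would read off the geometry of the two localized configurations: a single move into the diagonal cell of a dance places the two agents a knight's move apart, while two dances sharing a single diagonal cell place the two agents a $2$-step bishop move apart, exactly as in Figure \ref{fig:missing-simplices}. The main difficulty throughout is the careful bookkeeping separating individual-move commutativity from dance commutativity, and confirming that the diagonal cell is the unique place where this distinction can bite -- this is precisely what caps the obstruction dimension at $3$ and pins down the knight/bishop dichotomy.
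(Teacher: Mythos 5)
You have proven the wrong statement. The theorem at hand is Gromov's Link Condition itself---the classical result that an \emph{arbitrary} finite-dimensional cube complex $X$ is non-positively curved if and only if every vertex link is a flag simplicial complex. Your argument is instead a proof sketch of the paper's Theorem \ref{theorem:link-checking}: you build the dictionary between $lk(v)$ and admissible moves/dances in an agent-only gridworld, classify cliques, locate minimal empty simplices in dimensions $2$ and $3$, and extract the knight-move/bishop-move dichotomy. That theorem \emph{presupposes} Gromov's Link Condition as a black box (it is what converts ``flag links'' into ``NPC''), so it cannot substitute for it. Nothing in your proposal engages with what ``non-positively curved'' means metrically, which is the entire content of the statement: one must equip $X$ with its piecewise-Euclidean metric, interpret NPC as a local CAT(0) comparison condition on geodesic triangles, reduce this to the condition that each vertex link with its all-right spherical metric is CAT(1), and then prove Gromov's lemma that an all-right spherical simplicial complex is CAT(1) precisely when it is flag---this last equivalence being the real work, typically done by induction on dimension using the structure of spherical joins. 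A combinatorial analysis of gridworld generators says nothing about general cube complexes and cannot produce these metric comparison inequalities.

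Note that the paper itself does not prove this statement: it cites \cite{Gromov} and Theorem 5.20 of \cite{BH-NPC}, recording it without proof as a known classical result, and then uses it as the tool with which Theorem \ref{theorem:link-checking} is proved. So the appropriate response here is either a citation to the literature or a genuine metric-geometry argument along the lines sketched above. Your write-up, read as a proof of Theorem \ref{theorem:link-checking}, is in fact close in spirit to the paper's own appendix proof of that result (classification of low-dimensional simplices, reduction of higher-dimensional cliques by induction, the diagonal-cell analysis giving the knight and $2$-step bishop configurations)---but as a proof of Gromov's Link Condition it does not address the claim at all.
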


Thus, the local geometry of a cube complex is determined by the combinatorics of its links.







\subsection{Proof of Theorem \ref{theorem:link-checking}}\label{subsec:proof}

Before giving our proof, we first classify low-dimensional simplices in $lk(v)$ for a vertex $v$ in our modified state complex $\mathcal{S}'$. A $0$--simplex in $lk(v)$ corresponds to an admissible move at $v$. However, a $1$--simplex either represents a pair of commuting moves, or two moves in a common dance. A $2$--simplex either represents three agents moving pairwise independently, or a dancing agent commuting with a moving agent. Finally, a $3$--simplex represents either four agents moving pairwise independently, one dancing agent and two moving agents that pairwise commute, or a pair of commuting dancers.

\begin{theorem*}[Gromov's Link Condition in the modified state complex]
Let $v$ be a vertex in the modified state complex $\mathcal{S}'$ of an agent-only gridworld. Then
\begin{itemize}
 \item $lk(v)$ satisfies Gromov's Link Condition if and only if it has no empty $2$--simplices nor $3$--simplices, and
 \item if $lk(v)$ fails Gromov's Link Condition then there exist a pair of agents whose positions differ by either a knight move or a $2$--step bishop move (as in Figure \ref{fig:missing-simplices}).
\end{itemize}
\end{theorem*}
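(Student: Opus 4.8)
The plan is to reduce the flag property for $lk(v)$ to a finite, local combinatorial analysis built on the classification of low-dimensional simplices stated above, and to show that every minimal obstruction to flagness is already witnessed in dimension $2$ or $3$. First I would record the only two ways two move-vertices of $lk(v)$ can be joined by an edge: either their supports are disjoint (genuine commuting), or they are the two Moves of a common dance emanating from $v$ (perpendicular moves of a single agent into a $2\times 2$ block whose diagonal cell is free). From this I would derive the key packing fact: a single agent contributes at most two move-vertices to any complete subgraph of $lk(v)$, and if it contributes two then they form a dance pair. Indeed, two opposite moves of one agent are never adjacent, so three moves of one agent cannot be pairwise adjacent. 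Consequently the vertices of any simplex partition into dance pairs $D_1,\dots,D_p$ (with $2\times 2$ supports $\beta_1,\dots,\beta_p$) and single moves $s_1,\dots,s_q$.

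The crux is to separate \emph{adjacency in the link} from \emph{setwise commutativity}. A single move $s$ adjacent to both Moves of a dance $D$ need only have support disjoint from each individual Move-support (a pair of cells), whereas the $2$-simplex on $\{D,s\}$ exists -- by the classification, necessarily of dance-plus-commuting-move type -- precisely when $s$ commutes with the whole block $\beta$, i.e. avoids the diagonal cell as well. This is exactly the gap a knight move exploits. Analogously, the $3$-simplex on two dance pairs $\{D,D'\}$ exists iff the blocks $\beta,\beta'$ are disjoint, while its four bounding triangles only force each Move of one dance to commute with the other block.

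I would then prove the reduction for a minimal empty simplex, in which every proper face is present. Every edge forces adjacent single moves to have disjoint supports; every $2$-face of the form $\{D_i,s_j\}$ forces $s_j$ to commute with the whole block $\beta_i$; and every $3$-face of the form $\{D_i,D_k\}$ forces $\beta_i,\beta_k$ to be disjoint. If the simplex has dimension $\ge 4$ these faces are all proper, so all of the objects $\{D_i\}\cup\{s_j\}$ acquire pairwise disjoint supports, hence commute setwise, and the full cube -- and therefore the full simplex -- exists, a contradiction. Thus no empty simplex has dimension $\ge 4$; since empty $0$- and $1$-simplices cannot occur, any minimal obstruction lives in dimension $2$ or $3$, which gives the first bullet.

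For the second bullet I would analyse the two surviving obstruction types geometrically. An empty $2$-simplex must be a dance $D$ together with a single move $s$ adjacent to both dance-Moves but failing to commute with $\beta$; the only possibility is that $s$ moves a second agent into the diagonal cell $w$ of $\beta$ from outside the block, placing the two agents a knight's move apart. An empty $3$-simplex must be two dances $D,D'$ whose constituent Moves each avoid the other block yet whose blocks intersect; the disjointness constraints force the intersection to be exactly the shared diagonal corner $w=w'$, so the two dancing agents occupy opposite diagonal corners, i.e. a $2$-step bishop move apart. I would also confirm that the remaining $3$-simplex types -- four independent moves, or one dance with two moves -- admit no empty instances, since there the same face-presence argument already forces setwise commutativity. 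The main obstacle, and the conceptual heart of the argument, is precisely this adjacency-versus-commutativity gap around a dance's diagonal cell: getting the two-cell-versus-block bookkeeping right is what pins the failures to knight and two-step-bishop configurations and bounds the required foresight at four moves.
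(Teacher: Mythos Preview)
Your proposal is correct and follows essentially the same approach as the paper: classify edges of $lk(v)$ as either disjoint-support pairs or dance pairs, show that in dimension $\ge 4$ the presence of all proper faces forces setwise disjointness of supports (you frame this via a minimal empty simplex and an explicit partition into dance pairs and singletons, the paper via induction and the observation that every quartuple spans a $3$-simplex), and then geometrically pin down the two low-dimensional obstructions as knight and two-step bishop configurations. Your packing lemma (one agent contributes at most two vertices to a clique, and if two then a dance pair) makes the case analysis slightly more systematic than the paper's agent-count breakdown, but the content is the same.
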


\begin{proof}
If $lk(v)$ satisfies Gromov's Link Condition, then it has no empty simplices of any dimension, giving the forward implication. For the converse, assume that $lk(v)$ has no empty $2$--simplices nor $3$--simplices. Suppose there exist $n+1$ vertices spanning a complete subgraph of $lk(v)$, where $n \geq 4$. We want to show that these vertices span an $n$--simplex. By induction, we may assume that every subset of $n$ vertices from this set spans an $(n-1)$--simplex. Since $n \geq 4$, every quartuple of vertices in this subgraph spans a $3$--simplex. Therefore, appealing to our classification of low-dimensional simplices, every pair of moves or dances involved has disjoint supports. Thus, the desired $n$--simplex exists. Consequently, potential failures can only be caused by empty $2$--simplices or $3$--simplices.

Next, we want to determine when three pairwise adjacent vertices in $lk(v)$ span a $2$--simplex. These vertices represent three admissible moves at $v$. Since they are pairwise adjacent, they either correspond to three agents each doing a Move, or to one agent dancing with another one moving. In the former case, the supports are pairwise disjoint and so these moves form a commuting set of generators. Therefore, the desired $2$--simplex exists (indeed, in the absence of dancers, the situation is the same as the original Abrams, Ghrist \& Peterson setup). For the latter case, suppose that the first agent is dancing while the second moves. Since the $0$--simplices are pairwise adjacent, each of the two admissible moves within the dance has disjoint support with the second agent's move. Thus, the only way the support of the dance fails to be disjoint from that of the second agent's move is if the second agent can move into the diagonally opposite corner of the dance. Therefore, the only way an empty $2$--simplex can arise is if the agents' positions differ by a `knight move' (see Figure \ref{fig:missing-simplices} for illustration).

It remains to determine when four pairwise adjacent vertices in $lk(v)$ span a $3$--simplex. We may assume that each triple of vertices in this set spans a $2$--simplex, for otherwise we can reduce to the previous case.
Let us analyse each case by the number of involved agents. If there are four involved agents, then each $0$--simplex corresponds to exactly one agent moving. Since no dances are involved, it immediately follows that the desired $3$--simplex exists. If there are three involved agents, then one is dancing while the other two move. Since each triple of $0$--simplices spans a $2$--simplex, we deduce that each move has disjoint support with the dance. Therefore, the dance and the two moves form a commuting set, and so the $3$--simplex exists. Finally, if there are two agents then they must both be dancers. By the assumption on $2$--simplices, each admissible move within the dance of one agent has disjoint support from the dance of the other agent. Thus, the only way for the two dances to have overlapping supports is if their respective diagonally opposite corners land on the same cell. Therefore, the only way an empty $3$--simplex can arise (assuming no empty $2$--simplices) is if two agents' positions differ by a `$2$--step bishop move' (see Figure \ref{fig:missing-simplices} for illustration).
\end{proof}

\subsection{Python tool for constructing gridworlds and their state complexes}\label{subsec:python}
We developed a Python-based tool for constructing gridworlds with objects and agents. It includes a GUI application for the easy specification of gridworlds and a script which will produce plots and data of the resulting state complex. We ran all experiments on a Lenovo IdeaPad 510-15ISK laptop. The open-source code is available here: \href{https://github.com/tfburns/State-Complexes-of-Gridworlds}{https://github.com/tfburns/State-Complexes-of-Gridworlds}.

For the sake of generality and future-proofing of our software, we chose to construct the links in our implementation of checking Gromov's Link Condition in gridworlds, which is not necessary in-practice. Instead, in practical situations, one can directly check for supports of knight or two-step bishop moves between agents, which per Theorem \ref{theorem:link-checking} provides a computational short-cut for detecting failures in agent-only gridworlds. Another area of computational efficiency available in many rooms are in the symmetries of the room itself. For example, an evenly-sized square room can be cut into eighths (like a square pizza), where each eighth is geometrically identical to every other.

Users of the code will notice a small but important implementation detail in the code which we chose to omit the particulars of in this paper: in the code, we need to include labelled walls along the borders of our gridworlds. This is because we construct our gridworlds computationally as coordinate-free, abstract graphs. For Move, the lack of a coordinate system is not an issue -- if an agent label sees a neighbouring vertex with an empty floor label, the support exists and the generator can be used. However, Push/Pull only allows objects to be pushed or pulled by the agent in a straight line within the gridworld. We ensure this straightness in the abstract graph by identifying a larger subgraph around the object and agent than is illustrated in Figure \ref{fig:staircase-SC}. Essentially, we incorporate three wildcard cells (cells of any labelling) adjacent to three labelled cells (\texttt{`agent'}, \texttt{`object'}, and \texttt{`floor'}), such that together they form a $2 \times 3$ grid.

\subsection{Experiments in small rooms}\label{subsec:experiments-in-small-rooms}
\begin{table*}[h]
\centering
\caption{Data of Gromov's Link Condition failure and commuting $4$--cycles in the state complexes of a $3\times 3$ room with varying numbers of agents and no objects. The percentage of NPC states (shown in brackets in the second column) is rounded to the nearest integer. The mean number of Gromov's Link Condition failures (shown in the penultimate column) is the mean number of failures over the total number of states, and is rounded to two decimal places.}
\begin{tabular}{cccc|ccc|}
\cline{5-7}
                             &                                      &        &                 & \multicolumn{3}{c|}{\begin{tabular}[c]{@{}c@{}}Gromov's Link\\ Condition Failures\end{tabular}} \\ \hline
\multicolumn{1}{|c|}{Agents} & \multicolumn{1}{c|}{States (\% NPC)} & Dances & Commuting moves & Total                           & Mean                           & Max                          \\ \hline
\multicolumn{1}{|c|}{0}      & \multicolumn{1}{c|}{1 (100)}         & 0      & 0               & 0                               & 0                              & 0                            \\
\multicolumn{1}{|c|}{1}      & \multicolumn{1}{c|}{9 (100)}         & 4      & 0               & 0                               & 0                              & 0                            \\
\multicolumn{1}{|c|}{2}      & \multicolumn{1}{c|}{36 (78)}         & 20     & 44              & 32                              & 0.89                           & 4                            \\
\multicolumn{1}{|c|}{3}      & \multicolumn{1}{c|}{84 (62)}         & 40     & 220             & 184                             & 2.19                           & 14                           \\
\multicolumn{1}{|c|}{4}      & \multicolumn{1}{c|}{126 (65)}        & 40     & 440             & 288                             & 2.29                           & 11                           \\
\multicolumn{1}{|c|}{5}      & \multicolumn{1}{c|}{126 (68)}        & 20     & 440             & 152                             & 1.21                           & 6                            \\
\multicolumn{1}{|c|}{6}      & \multicolumn{1}{c|}{84 (86)}         & 4      & 220             & 16                              & 0.19                           & 2                            \\
\multicolumn{1}{|c|}{7}      & \multicolumn{1}{c|}{36 (100)}        & 0      & 44              & 0                               & 0                              & 0                            \\
\multicolumn{1}{|c|}{8}      & \multicolumn{1}{c|}{9 (100)}         & 0      & 0               & 0                               & 0                              & 0                            \\
\multicolumn{1}{|c|}{9}      & \multicolumn{1}{c|}{1 (100)}         & 0      & 0               & 0                               & 0                              & 0                            \\ \hline
\end{tabular}
\label{tab:3x3-no-object}
\end{table*}

Summary statistics for the $3 \times 3$ room with varying numbers of agents is shown in Table \ref{tab:3x3-no-object}, showing the distribution in failures of Gromov's Link Condition across these conditions. The $2 \times 3$ room with two agents shows multiple instances of local positive curvature in the associated (modified) state complex. Figure \ref{fig:2x3} shows one such state where Gromov's Link Condition fails due to the agents being separated by a knight's move (see Theorem \ref{theorem:link-checking}). At this state, there are actually two empty $2$--simplices in its link -- this is because the pattern appearing in the $5$--cell subgrid with two agents (as in Figure \ref{fig:missing-simplices}) arises in two different ways within the given state on the gridworld. The only other state where Gromov's Link Condition fails is a mirror image of the one shown.

Further small gridworlds and their respective state complexes are shown in Figures \ref{fig:3x3_2-agents}, \ref{fig:4x1_2-agents}, and \ref{fig:4x4_2-agents}.

\begin{figure}[h]
    \centering
    \begin{center}
    \includegraphics[width=0.8\textwidth]{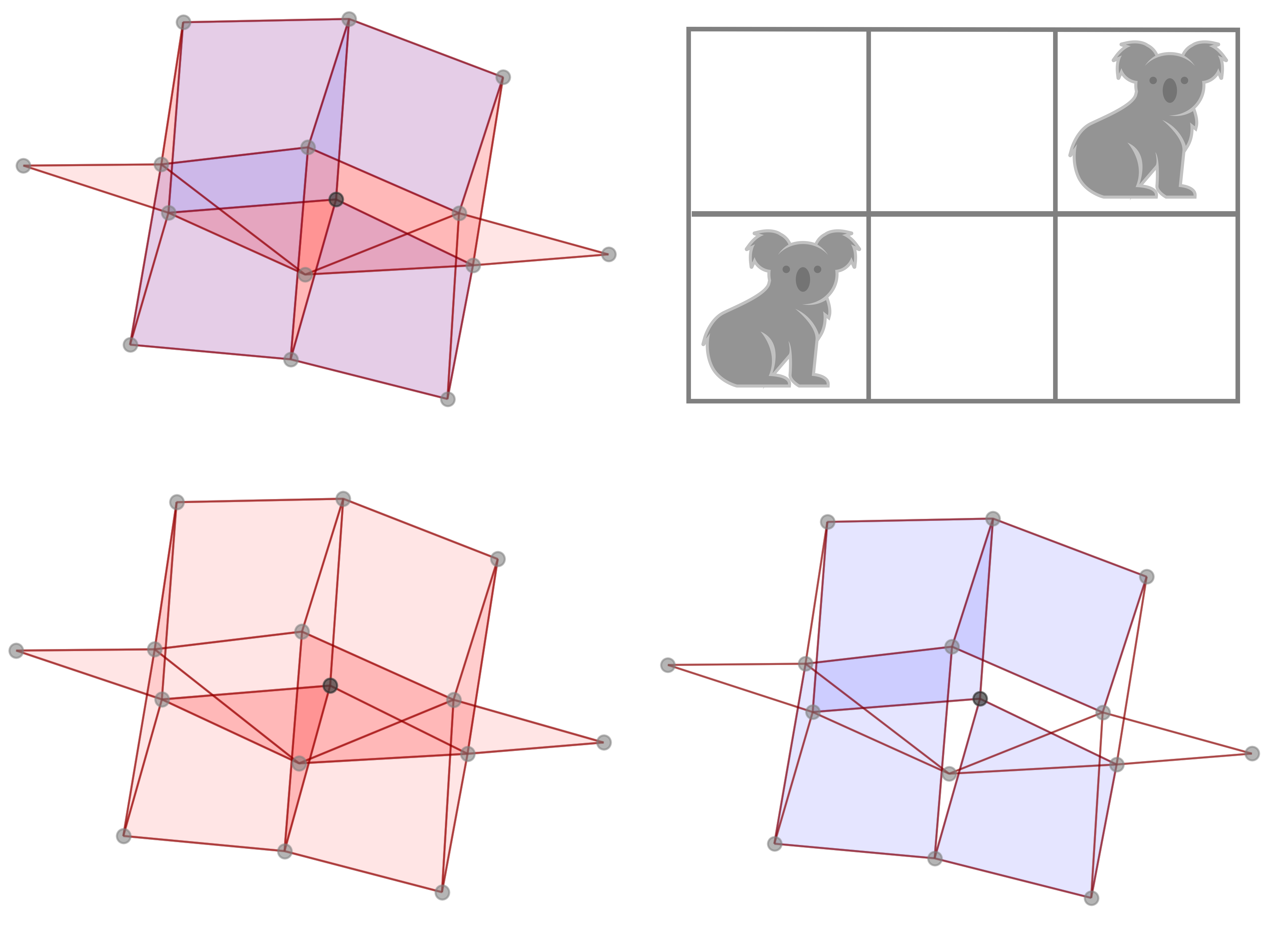}
    \end{center}
    \caption{A $2 \times 3$ room with two agents (top right) and its state complex (top left), where dances are shaded blue and commuting moves are shaded red. The darker-shaded vertex represents the state of the gridworld shown. Also shown is the state complex with only commuting moves (bottom left) and only dances (bottom right).}
    \label{fig:2x3}
\end{figure}

\begin{figure}[h]
    \centering
    \begin{center}
    \includegraphics[width=0.725\textwidth]{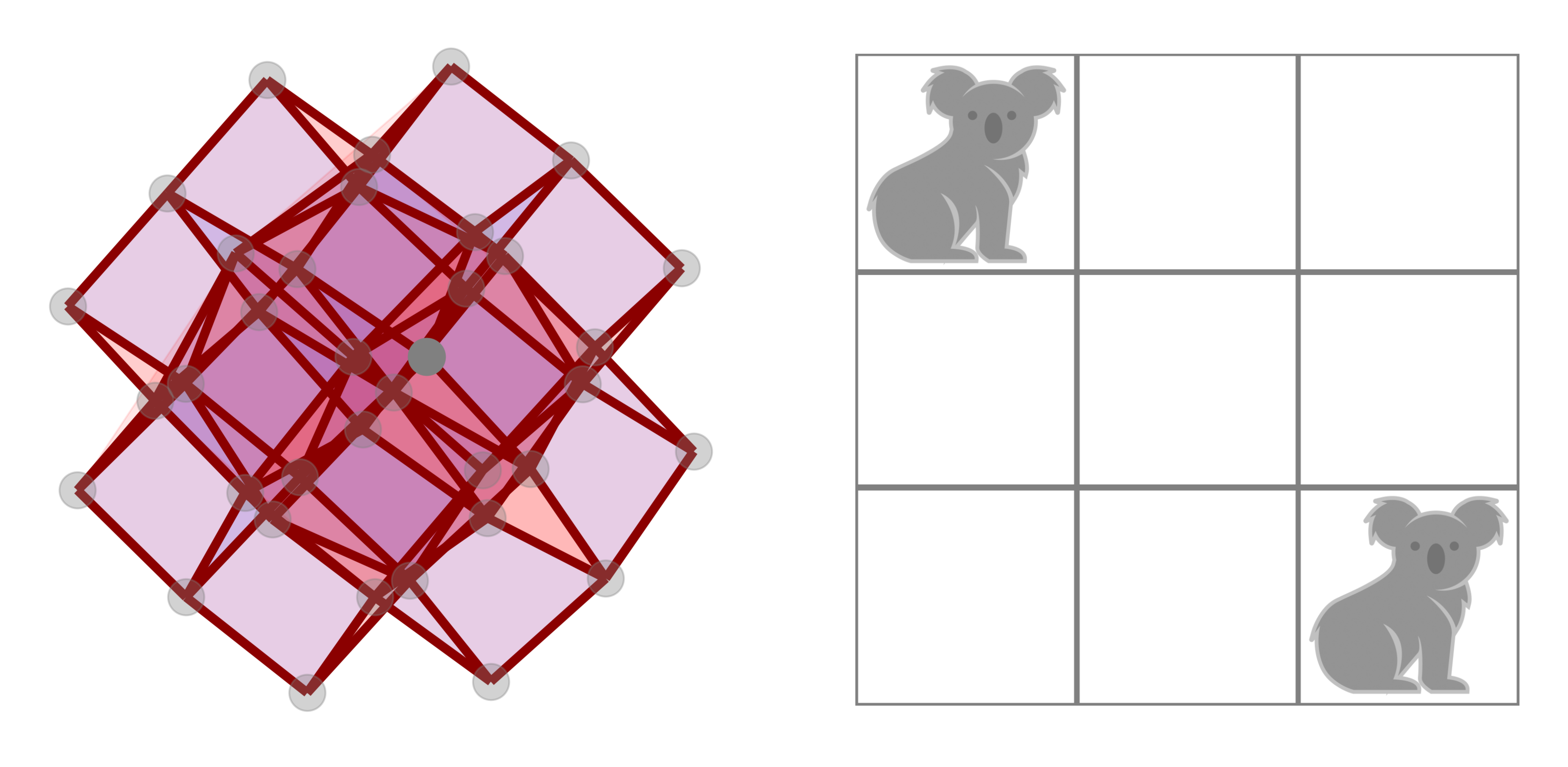}
    \end{center}
    \caption{A $3 \times 3$ room with two agents (right) and its state complex (left), where dances are shaded blue and commuting moves are shaded red. The darker-shaded vertex represents the state of the gridworld shown. Naturally-occurring copies of this state complex can be found as sub-complexes in the state complex shown in Figure \ref{fig:4x4_2-agents}.}
    \label{fig:3x3_2-agents}
\end{figure}

\begin{figure}[h]
    \centering
    \begin{center}
    \includegraphics[width=0.7\textwidth]{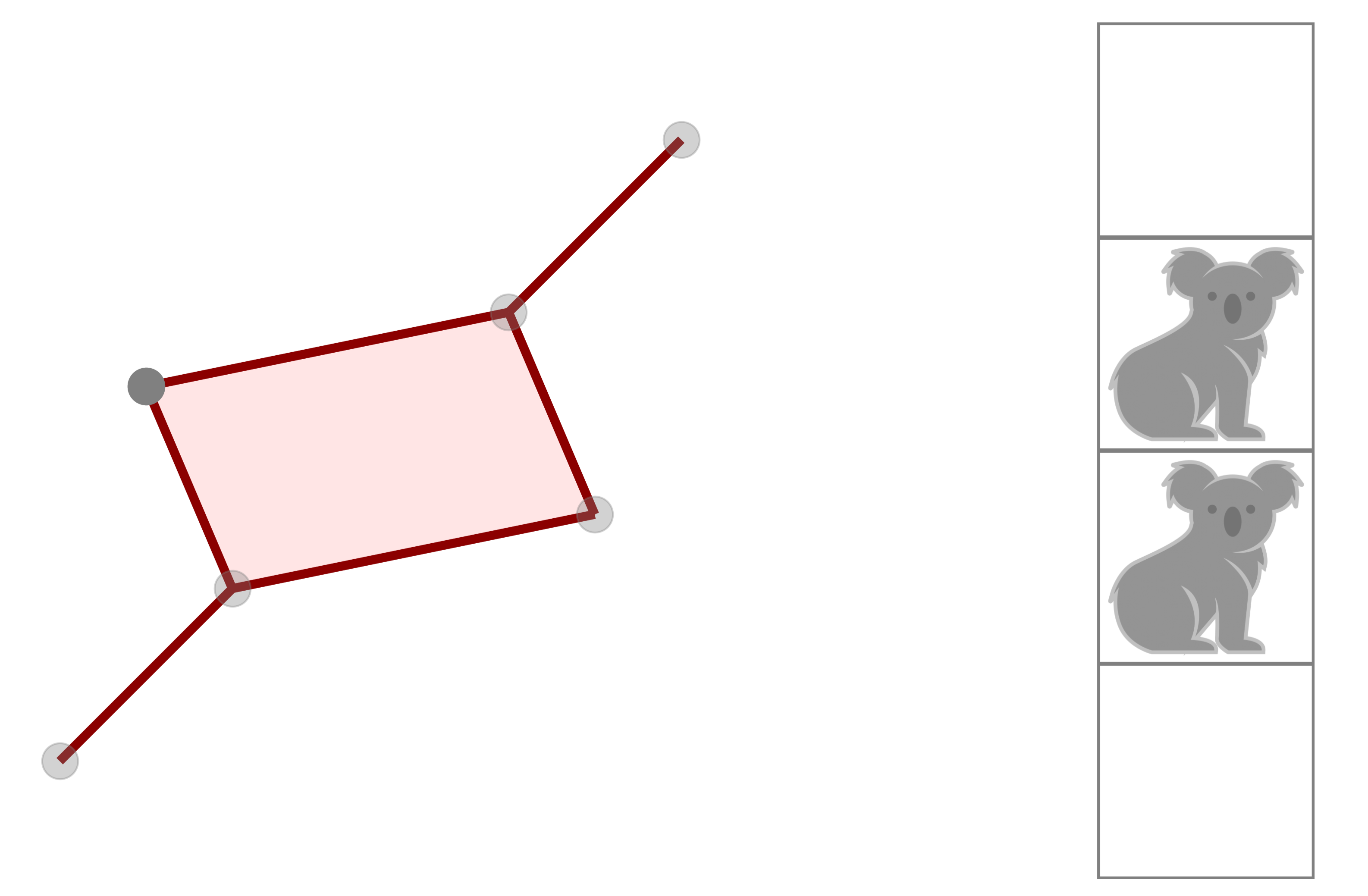}
    \end{center}
    \caption{A $4 \times 1$ corridor with two agents (right) and its state complex (left). There are no dances and only one commuting move, shaded red. The darker-shaded vertex represents the state of the gridworld shown. Naturally-occurring copies of this state complex can be found as sub-complexes in the state complex shown in Figure \ref{fig:4x4_2-agents}.}
    \label{fig:4x1_2-agents}
\end{figure}

\begin{figure}[h]
    \centering
    \begin{center}
    \includegraphics[width=0.95\textwidth]{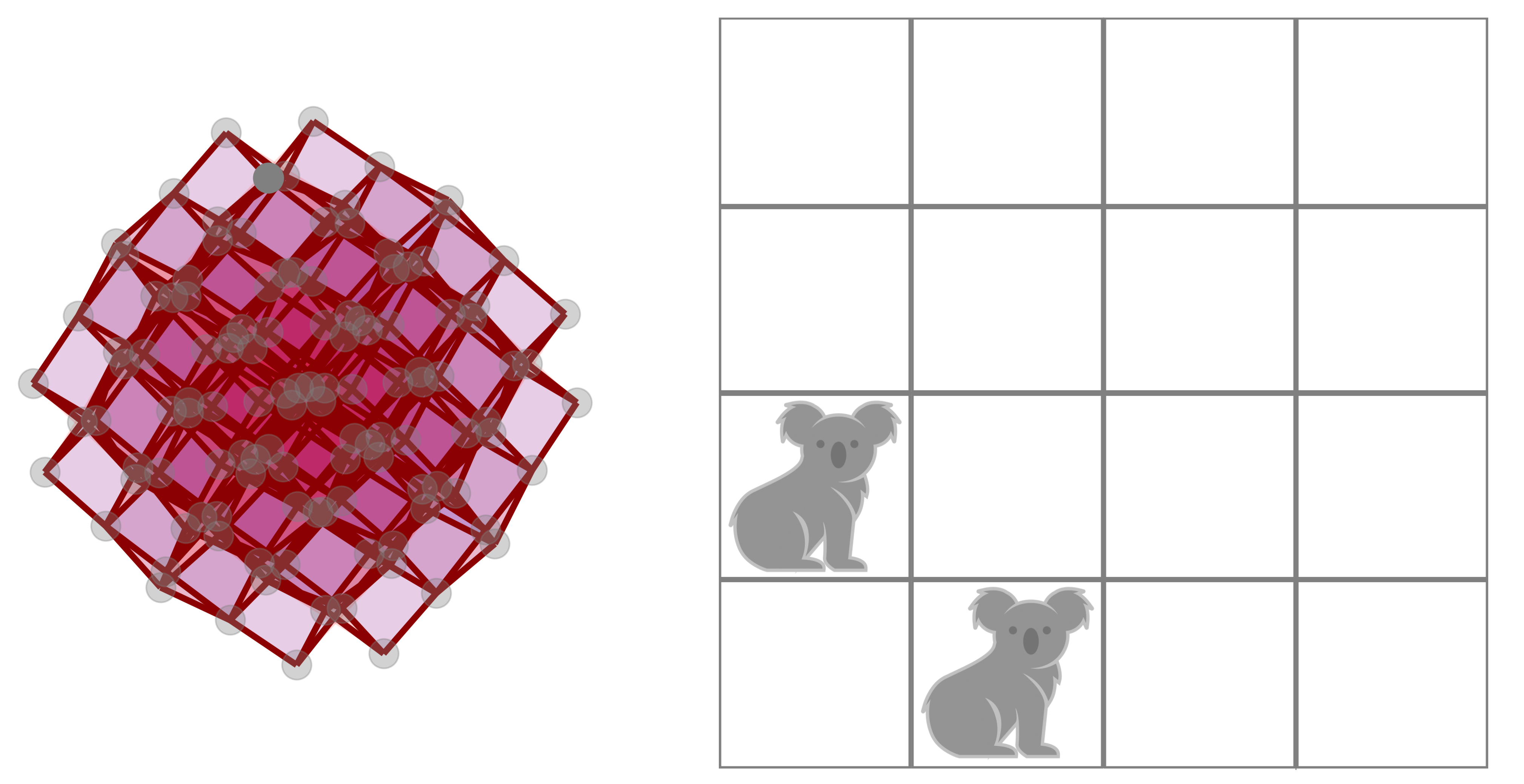}
    \end{center}
    \caption{A $4 \times 4$ room with two agents (right) and its state complex (left), where dances are shaded blue and commuting moves are shaded red. The darker-shaded vertex represents the state of the gridworld shown. Embedded within this state complex are naturally-occurring copies of the state complex of the $4 \times 1$ corridor with two agents, shown in Figure \ref{fig:4x1_2-agents}. There are also naturally-occurring copies of state complex of the $3 \times 3$ room with two agents, shown in Figure \ref{fig:3x3_2-agents}.}
    \label{fig:4x4_2-agents}
\end{figure}

\end{document}